\documentclass[letterpaper, 10 pt]{ieeeconf}

\IEEEoverridecommandlockouts
\usepackage{amsmath,amssymb,amsfonts}
\usepackage[ruled,vlined,linesnumbered]{algorithm2e}
\usepackage[utf8]{inputenc}
\usepackage{bm}
\usepackage{textcomp}
\usepackage{stfloats}
\usepackage{url}
\usepackage{verbatim}
\usepackage{siunitx}
\usepackage{balance}
\usepackage{tabularx}
\usepackage{booktabs}  

\newtheorem{theorem}{Theorem}
\newtheorem{lemma}{Lemma}
\newtheorem{remark}{Remark}

\newtheorem{definition}{Definition}
\newtheorem{assumption}{Assumption}

\DeclareMathOperator*{\diag}{diag}
\renewcommand*{\arraystretch}{1.5}
\usepackage{lipsum}
\usepackage{mathtools}
\usepackage{subcaption}
\usepackage{graphicx}

\newcommand{\black}[1]{{\textcolor{black}{#1}}}

\DeclareMathSizes{10}{9}{7}{5}

\usepackage{tikz}
\usetikzlibrary{positioning, arrows.meta, fit, backgrounds, calc}

\usepackage[pdftex, plainpages=false,hypertexnames=true,pdfnewwindow=true,colorlinks=true,citecolor=blue,linkcolor=red,urlcolor=blue,filecolor=blue]{hyperref}%
\usepackage{etoolbox}

\usepackage{titlesec}
\titlespacing*{\section}
  {0pt}                     
  {.1ex plus .3ex minus .2ex} 
  {.1ex plus .2ex}           

\titlespacing*{\subsection}
  {0pt}                     
  {.1ex plus .3ex minus .2ex} 
  {.1ex plus .2ex}           

\titlespacing*{\paragraph}
  {0pt}                     
  {.1ex plus .3ex minus .2ex} 
  {.1ex plus .2ex} 

\title{ \Huge Stable Transformer-Actor-Critic Model Predictive Control: A Contraction Analysis Approach }

\begin{document}

\author{Antonio Marino$^{1}$, Valerio Modugno$^{2}$, and Marco Cognetti$^{3}$%
\thanks{$^{1}$ University of Cambridge, email: am3507@cam.ac.uk; $^{2}$University College London, email: v.modugno@ucl.ac.uk; $^{3}$LAAS-CNRS, Université de Toulouse, CNRS, UPS, Toulouse, France, email: marco.cognetti@laas.fr.}
\thanks{Project page: https://seaingant.github.io/stac-mpc/}
}

\maketitle

\begin{abstract}
Actor-Critic Model Predictive Control (MPC) effectively addresses complex, non-convex control problems, but guaranteeing the closed-loop stability of sequence-based learning models within these pipelines remains challenging. This paper introduces a novel Transformer-Actor-Critic MPC architecture with formal robustness guarantees. First, we prove that Transformer networks can satisfy global incremental Input-to-State Stability ($\delta$ISS). We then leverage Riemannian contraction theory to analyze the interconnected dynamics between the physical plant and the predictive neural network. Finally, we integrate these theoretical bounds as a training regularizer to yield a certifiably robust policy. The framework is validated on a nonlinear 3D drone model executing target-reaching and obstacle-avoidance maneuvers. 
\end{abstract}
\begin{keywords}
    Learning-based Control, Contraction theory, Neural networks 
\end{keywords}
\section{Introduction}
Synthesizing control strategies for nonlinear systems remains fundamentally challenging. While pure learning methodologies~\cite{bertsekas2024model} approximate dynamic programming efficiently, they notoriously lack rigorous stability guarantees. This deficiency has motivated hybrid optimization-learning frameworks \cite{ hewing2020learning, dzhumageldyev2025safe} that integrate learning components into Model Predictive Control (MPC) to improve computational tractability while maintaining certifiable properties. Consequently, establishing theoretical guarantees for these hybrid systems is an active research area. Recent efforts have explored learned neural dynamics~\cite{bonassi2024nonlinear} and contraction theory, which guarantees robust closed-loop stability without restrictive terminal conditions~\cite{alamir2017contraction, wei2022discrete}.

Even with known dynamics, non-convex optimization often necessitates constraint relaxation. While learning reference trajectories shifts this complexity out of the online solver, Actor-Critic MPC~\cite{romero2024actor} generalizes this approach by dynamically adjusting the linear and quadratic coefficients of the MPC cost function. Enabled by differentiable MPC formulations~\cite{adabag2025differentiable, amos2018differentiable, amatucci2025primal}, this network shapes the local optimization landscape. However, formal guarantees for Actor-Critic MPC remain remarkably scarce, and current implementations rely on simple Multi-Layer Perceptrons (MLPs).

MLPs inherently struggle to capture the temporal consistencies required in environments with time-correlated disturbances. Sequence models like Transformers~\cite{geneva2022transformers, wang2026transformer} provide a natural framework to predict context-aware MPC weights from state trajectories. Furthermore, integrating Transformers into optimization frameworks is highly relevant for future applications, as they form the backbone of modern Vision-Language-Action (VLA) models~\cite{kawaharazuka2025vision}. While recent ODE-based analyses have proved asymptotic stability for Transformers~\cite{zhong2022neural, abella2024asymptotic}, these results rely heavily on restrictive unit-norm projections. A critical gap remains in establishing fundamental robustness notions—such as incremental Input-to-State Stability ($\delta$ISS)~\cite{d2023incremental, marino2024input} and contraction~\cite{tsukamoto2021contraction} for feedback systems.

To address this gap, this paper makes two intertwined contributions. First, we prove that Transformer architectures can be mathematically constrained to satisfy global $\delta$ISS with respect to the induced infinity norm. Second, we propose a novel hybrid Transformer-Actor-Critic MPC architecture. We demonstrate that the derived $\delta$ISS Lipschitz bounds for the Transformer directly enable a coupled small-gain condition for the physical plant. By enforcing this closed-loop contraction stability during training, our framework yields a control policy that is both highly performant and certifiably robust.

\begin{figure}[t]
\centering
\resizebox{\linewidth}{!}{
\begin{tikzpicture}[
    node distance = 0.8cm and 0.8cm, 
    base/.style = {rectangle, draw=black!80, text centered, rounded corners, minimum height=0.7cm, font=\sffamily\footnotesize, thick},
    nn/.style = {base, fill=blue!10, text width=1.8cm},
    transformer/.style = {base, fill=orange!10, text width=2.2cm, minimum height=0.8cm, font=\sffamily\bfseries\footnotesize},
    mpc/.style = {base, fill=gray!10, text width=2.2cm, minimum height=0.7cm},
    system/.style = {rectangle, inner sep=0pt}, 
    data/.style = {rectangle, text centered, text width=1.4cm, font=\sffamily\scriptsize},
    arrow/.style = {-{Latex[length=2mm, width=2mm]}, thick},
    line/.style = {thick},
    dot/.style = {circle, fill, inner sep=1pt}
]

    
    \node [data] (ref) {Ref $x^d_{k-L:k}$};
    
    \node [data, below=1.5cm of ref] (noise) {Noise $n_{k-L:k}$};
    
    \node [circle, draw, inner sep=1pt, right=1.1cm of noise] (sum_noise) {$+$};
    
    \path (ref) -- (sum_noise) coordinate[midway] (mid_in);
    
    \node [nn, right=0.3cm of ref] (mlp_in) {Initial MLP \\ \tiny(Feature Ext.)};
    
    \node [transformer, right=1.0cm of mlp_in] (transformer) {Transformer \\ Sequence Model};
    
    \node [nn, right=1.0cm of transformer] (mlp_out) {Final MLP \\ \tiny(Projection)};
    
    \node [data, right=0.6cm of mlp_out, text width=1.5cm] (costs) {Reshaped to\\ $\hat{Q}, \hat{P}$ matrices};

    
    \node [mpc, below=1.6cm of transformer, xshift=-0.5cm] (mpc) {Model Predictive \\ Controller};
    
    \node [circle, draw, inner sep=1pt, right=0.6cm of mpc] (sum_dist) {$+$};
    
    \node [data, above=0.5cm of sum_dist] (dist) {$d_k$};
    
    \node [system, right=0.6cm of sum_dist] (system) {\includegraphics[width=2.5cm]{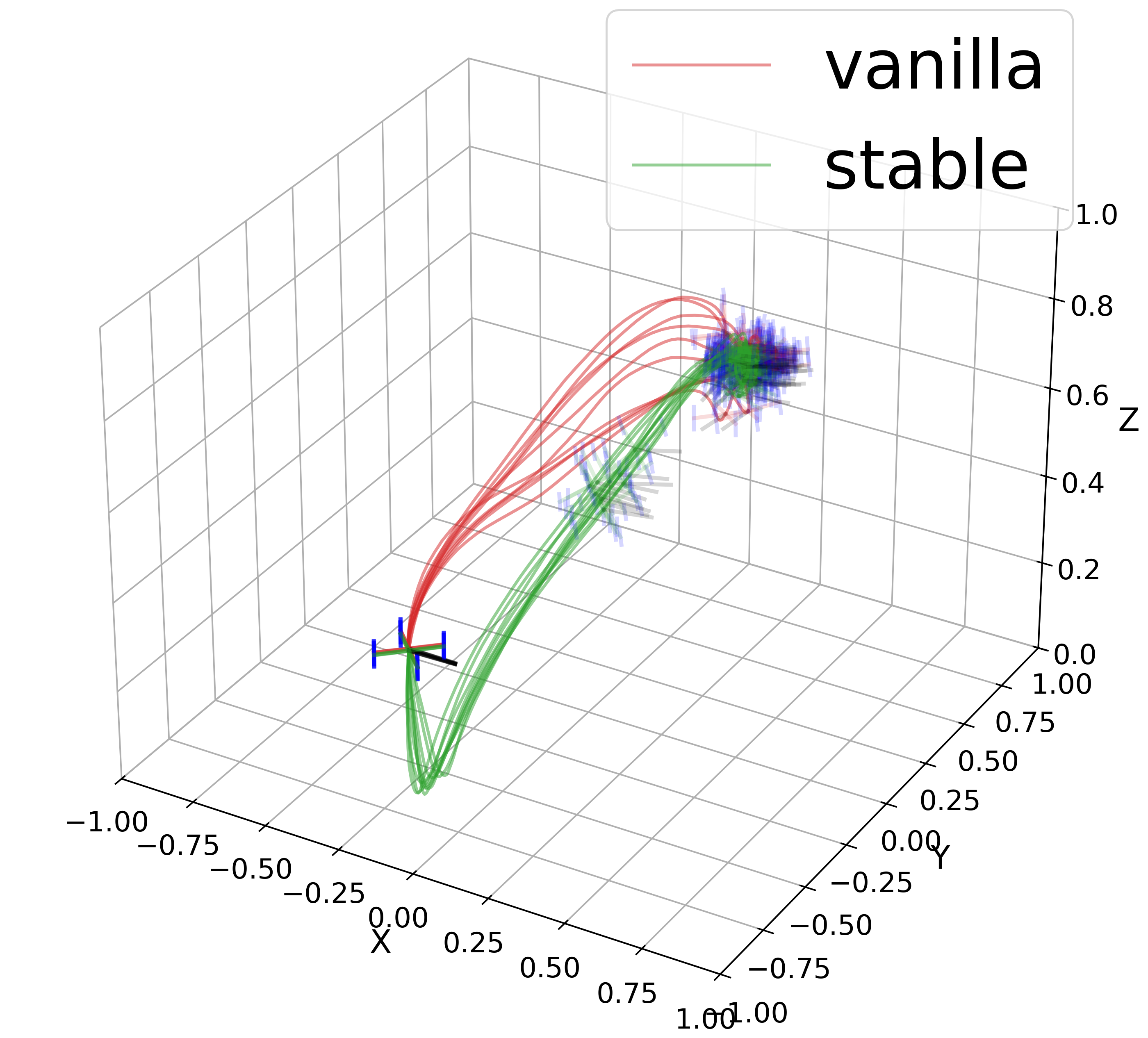}};

    
    \draw [arrow] (ref.east) -- ($(mlp_in.west)$);
    
    \draw [arrow] (noise) -- (sum_noise) node[midway, right, font=\sffamily\tiny] {$+$};
    
    \draw [arrow] (sum_noise.north) -- ($(mlp_in.south)$) node[near start, left, font=\sffamily\scriptsize] {$\tilde{x}_{k-L:k}$};
        
    \draw [arrow] (mlp_in) -- (transformer) node[midway, above, font=\sffamily\scriptsize] {$z_k$};
    \draw [arrow] (transformer) -- (mlp_out) node[midway, above, font=\sffamily\scriptsize] {$p_k$};
    \draw [arrow] (mlp_out) -- (costs);
    
    \draw [arrow] (costs.south) -- +(0,-0.5cm) -| (mpc.north);
    
    \draw [arrow] (mpc) -- (sum_dist) node[midway, above, font=\sffamily\scriptsize] {$u_k$};
    
    \draw [arrow] (dist) -- (sum_dist) node[midway, right, font=\sffamily\tiny] {$+$};
    
    \draw [arrow] (sum_dist) -- (system);

    
    \coordinate (sys_out) at ($(system.east) + (0.8cm, 0)$);
    \draw [line] (system.east) -- (sys_out) node[midway, above, font=\sffamily\scriptsize] {$x_{k+1}$};
    
    \coordinate (fb_down_right) at ($(sys_out) + (0, -1.5cm)$);
    \coordinate (fb_down_left)  at (sum_noise |- fb_down_right);
    
    \draw [line] (sys_out) -- (fb_down_right);
    \draw [line] (fb_down_right) -- (fb_down_left);
    
    \draw [arrow] (fb_down_left) -- (sum_noise) node[near end, right, font=\sffamily\tiny] {$+$};
    
    \coordinate (mpc_tap) at (mpc.south |- fb_down_right);
    \node [dot] at (mpc_tap) {}; 
    \draw [arrow] (mpc_tap) -- (mpc.south) node[midway, right=1mm, font=\sffamily\scriptsize] {$\tilde{x}_k$};

    \begin{scope}[on background layer]
        
        \node [
            rectangle, 
            draw=gray!50, 
            thick, 
            dashed,
            rounded corners, 
            inner xsep=0.2cm, 
            inner ysep=0.4cm, 
            fit=(mlp_in) (transformer) (mlp_out) (costs),
            fill=gray!5
        ] (ai_box) {};
        \node [anchor=south west, text=gray, font=\sffamily\bfseries\tiny, yshift=1mm] at (ai_box.north west) {Predictive Network ($\theta$)};
        
        \node [
            rectangle, 
            draw=teal!60, 
            thick, 
            dashed,
            rounded corners, 
            inner xsep=1.0cm, 
            inner ysep=0.2cm, 
            fit=(mpc) (dist) (system),
            fill=teal!5
        ] (ctrl_box) {};
        \node [anchor=south west, text=teal!80, font=\sffamily\bfseries\tiny, yshift=1mm] at (ctrl_box.north west) {Controlled System};
        
    \end{scope}

\end{tikzpicture}
}
\caption{Closed-loop architecture mapping the predicted parameter matrices to the MPC.}
\label{fig:closed_loop_arch}
\end{figure}

\subsection{Notation}
\black{Throughout this paper, $\mathbb{R}$ and $\mathbb{Z}_{\geq 0}$ denote the sets of real numbers and non-negative integers. The subscript $k$ indicates a variable at the current discrete time step (e.g., $M_k$), $i\vert{}k$ denotes a prediction $i$ steps ahead from time $k$ (e.g., $Q_{i\vert{}k}$), and $k-L:k$ denotes a concatenated temporal sequence from step $k-L$ to $k$. For a real symmetric matrix $M$, $M \succ 0$ ($M \succeq 0$) indicates positive (semi-)definiteness, while $\lambda_{\min}(M)$ and $\lambda_{\max}(M)$ denote its minimum and maximum eigenvalues. The operators $\|\cdot\|_2$, $\|\cdot\|_\infty$, and $\|x\|_M \triangleq x^\top M (\cdot) x$ define the Euclidean, induced infinity, and $M$-weighted norms, respectively.  The notation $\partial / \partial x$ and $\nabla_x(\cdot)$ denotes the partial derivative and the jacobian with respect to $x$, and $\delta x$ represents the difference between two variables within the same state space. Standard continuous comparison function classes used in stability analysis are denoted by $\mathcal{KL}$ and $\mathcal{K}_\infty$. Finally, within the neural network architectures, $\text{Concat}(\cdot)$, $\text{softmax}(\cdot)$, and $So(\cdot)$ represent horizontal matrix concatenation, the row-wise normalized exponential function, and the smooth Softplus activation, respectively.}
\section{Preliminaries and Problem Formulation}
Consider the following discrete-time nonlinear system:
\begin{equation}
    x_{k+1} = \Phi(x_k, u_k) + d_k(x_k)
    \label{eq:system}
\end{equation}
where $x_k \in \mathcal{X} \subseteq \mathbb{R}^{n_x}$ is the system state, $u_k \in \mathcal{U} \subseteq \mathbb{R}^{n_u}$ is the control input, and $d_k : \mathcal{X} \rightarrow \mathbb{R}^{n_x}$ is an additive disturbance acting on the system. The disturbance is assumed to be uniformly bounded, i.e., there exists $\bar{d} \in \mathbb{R}_{> 0}$ such that $\sup_{x \in \mathcal{X},\, k \in \mathbb{Z}_{\geq 0}} \|d_k(x)\|_{\infty} \leq \bar{d}$.
\begin{assumption}
    The system state $x \in \mathcal{X}$ is uniformly bounded by a constant limit constraint satisfying $\|x\|_{\infty} \leq \tilde{X}$, and analogously, the action space mandates that the applied control is subject to $\|u\|_{\infty} \leq \tilde{U}$.
    \label{assumption2}
\end{assumption}

\begin{lemma}[\cite{tsukamoto2021contraction}]
\label{lemma:contraction}
Let $x^1_k$ and $x^2_k$ be two state trajectories 
of the system \eqref{eq:system}. If there exists 
a uniform positive definite contraction matrix $M(x_k) \succeq 0$
such that the following condition holds for a contraction rate $\alpha \in (0,1)$ and 
$ \forall x \in \mathcal{X}$:
\begin{equation}
    \small
    \begin{aligned}
    \small
     \frac{\partial \Phi_k}{\partial x_k} ^{\top} M_{k+1}(x_{k+1}) \frac{\partial \Phi_k}{\partial x_k}  \preceq \alpha^2 M_k(x_k)
    \end{aligned}
\end{equation}
then the path integral of the geodesic distance $\ell(\cdot)$ connecting 
\black{$x_k^1$ and $x_k^2$, $d_\mathcal{M}(x_k^1, x_k^2) \triangleq \inf_{\ell} \int_{x^0_{k}}^{x_k^1} \left\| \frac{\partial \ell(s, k)}{\partial s} \right\|_{M(\ell(s, k))} ds$}  
is contracting on the Riemannian manifold with contraction rate $\alpha$: 
\begin{equation}
    \small
    d_\mathcal{M}(x_{k}^1, x_{k}^2) \leq \alpha^k d_\mathcal{M}(x_{0}^1, x_{0}^2) + \bar{d}\sqrt{\bar{m}}\frac{(1-\alpha^k)}{(1-\alpha)}
\end{equation}
With $\bar{m} = \sup_{\mathcal{X},k} \lambda_{max}(M_{k})$. 
\end{lemma}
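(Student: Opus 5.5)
The plan is the standard discrete-time contraction argument: propagate a geodesic through the closed-loop map, bound the growth of its Riemannian length using the hypothesis, and handle the disturbance as an additive perturbation. We treat $\Phi_k(\cdot) \triangleq \Phi(\cdot,u_k)$ as the closed-loop map at step $k$, so that $\partial\Phi_k/\partial x_k$ is its Jacobian. We also read the hypothesis as $M$ being uniformly positive definite, with $M_k \preceq \bar{m} I$.

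\textbf{Step 1 (one-step contraction without disturbance).} Fix $k$ and let $\ell(s,k)$, $s\in[0,1]$, be a minimizing geodesic joining $x_k^2$ to $x_k^1$. If the geodesic infimum is not attained, take an $\epsilon$-optimal curve and let $\epsilon\to 0$ at the end. Push the curve forward by $\Phi_k$ to get $\tilde\ell(s)=\Phi_k(\ell(s,k))$, which joins $\Phi_k(x_k^2)$ to $\Phi_k(x_k^1)$. By the chain rule, $\partial_s\tilde\ell = \frac{\partial\Phi_k}{\partial x}\partial_s\ell$. The matrix inequality then gives, pointwise in $s$,
\[
\|\partial_s\tilde\ell\|_{M_{k+1}(\tilde\ell)} \le \alpha\,\|\partial_s \ell\|_{M_k(\ell)} .
\]
Integrating over $s$ and taking the infimum over curves yields $d_\mathcal{M}(\Phi_k(x_k^1),\Phi_k(x_k^2))\le \alpha\, d_\mathcal{M}(x_k^1,x_k^2)$.

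\textbf{Step 2 (disturbance).} The true successors are $x_{k+1}^i=\Phi_k(x_k^i)+d_k(x_k^i)$. I will use the triangle inequality for $d_\mathcal{M}$ and bound the extra displacement by a straight segment, whose length in the metric is at most $\sqrt{\bar m}$ times its Euclidean length. This gives
\[
d_\mathcal{M}(x_{k+1}^1,x_{k+1}^2)\le \alpha\, d_\mathcal{M}(x_k^1,x_k^2)+\sqrt{\bar m}\,\bar d ,
\]
with the norm conversion absorbed as in the statement. This is the step I expect to be the main obstacle. The disturbance depends on the state, so two trajectories receive different disturbances, and a naive bound produces $2\bar d$ together with an $\infty$-to-$2$ norm factor. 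The cleanest route is to interpret the lemma as in \cite{tsukamoto2021contraction}: compare a nominal trajectory with a perturbed one, or equivalently treat $d_k$ as a perturbation entering the differential dynamics along the path. Either way only a single $\bar d$ contribution appears per step. I would state this interpretation explicitly and keep the constant exactly as written.

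\textbf{Step 3 (iteration).} Write $a_k = d_\mathcal{M}(x_k^1,x_k^2)$. Unrolling the recursion $a_{k+1}\le \alpha a_k+\sqrt{\bar m}\,\bar d$ gives
\[
a_k\le \alpha^k a_0+\sqrt{\bar m}\,\bar d\sum_{j=0}^{k-1}\alpha^j = \alpha^k a_0 + \bar d\sqrt{\bar m}\,\frac{1-\alpha^k}{1-\alpha},
\]
which is the claimed bound.
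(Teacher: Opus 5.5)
\textbf{Verdict: genuine gap.} Step 2 cannot deliver the stated constant, because the inequality as literally written is false.

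The paper gives no proof of this lemma. It imports it from the cited tutorial, and your Steps 1 and 3 are the standard argument behind that result: push a near-minimizing curve forward, apply the matrix inequality pointwise, integrate, then sum the geometric series. Step 1 is sound once you say explicitly that the hypothesis is used with $M_{k+1}$ evaluated at the nominal successor $\Phi_k(\ell(s,k))$ of every point of the curve, and that the curve stays where the hypothesis holds.

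The failure of the literal statement means no interpretation lets you ``keep the constant exactly as written.'' Two examples show this.
\begin{itemize}
\item Two disturbed trajectories: take $n_x=1$, $M\equiv 1$ (so $\bar m=1$), $\Phi(x,u)=\alpha x$, $d_k(x)=\bar d\,\mathrm{sign}(x)$, and $x_0^1=a=-x_0^2>0$. The hypothesis holds with equality. Yet $d_\mathcal{M}(x_1^1,x_1^2)=2\alpha a+2\bar d$, whereas the claimed bound is $2\alpha a+\bar d$. Asymptotically the distance tends to $2\bar d/(1-\alpha)$ against a claimed $\bar d/(1-\alpha)$.
\item Nominal versus perturbed: since $\bar d$ bounds $\|d_k\|_\infty$, this reading also fails for $n_x\ge 2$. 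Take $M=I$, $\Phi=\alpha x$, both trajectories starting at $0$, and let the perturbed one receive $d_k\equiv\bar d\,\mathbf{1}$. After one step the distance is $\sqrt{n_x}\,\bar d>\bar d$.
\end{itemize}

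So the ``norm conversion absorbed as in the statement'' is not available. Your ``equivalently'' is also wrong for two disturbed trajectories. A virtual path interpolating between $d_k(x_k^1)$ and $d_k(x_k^2)$ is forced, in the path parameter, by their difference, which can have size $2\bar d$. It therefore reproduces the factor $2$ rather than removing it.

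Your straight-segment triangle-inequality argument is correct, but it proves a corrected statement, and you must say which one.
\begin{itemize}
\item[(a)] $x_k^2$ solves the unperturbed system $x_{k+1}=\Phi_k(x_k)$, $x_k^1$ solves \eqref{eq:system}, and $\bar d\ge\sup\|d_k\|_2$. This is the setting of the robust discrete-time result in the cited tutorial. Your Step 2 then gives exactly $\sqrt{\bar m}\,\bar d$ per step.
\item[(b)] Both trajectories solve \eqref{eq:system} with the paper's $\infty$-norm bound. Then the per-step term, and hence the constant in the conclusion, becomes $2\sqrt{n_x\bar m}\,\bar d$ instead of $\sqrt{\bar m}\,\bar d$. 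The examples above show neither factor can be dropped in general.
\end{itemize}
In either case, also require the connecting segments to lie in the region where $\lambda_{\max}(M_{k+1})\le\bar m$.
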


Given a desired state $x^d_{k} \in \mathcal{X}$ and the predicted state-action pair $\omega_{i|k} = [ x_{i|k}^{\top},u_{i|k}^{\top}]^{\top}$, we formulate the MPC problem over horizon $T$ as:
\begin{equation}
    \small
    \begin{aligned}
        \small
    \min_{\omega} \quad J_{Y} &= \sum_{i=0}^{T} \omega_{i|k}^{\top}Q_{i|k}(x_k, x^d_{k})\omega_{i|k} + P_{i|k}(x_k, x^d_{k})\omega_{i|k} \\
    \text{subject to} \quad & x_{i+1|k} = \Phi(x_{i|k}, u_{i|k}); \\
        & h(\omega_{i|k}) \leq 0; \quad u_{i|k} \in \mathcal{U}; \quad x_{i|k} \in \mathcal{X}; 
    \end{aligned}  
    \label{eq:problem}
\end{equation}
\black{where $\omega = [\omega^{\top}_{0\vert{}k}, \dots, \omega^{\top}_{T\vert{}k}]^{\top}$ concatenates the decision variables subject to constraints $h(\omega_{i\vert{}k})$. The local optimization landscape is governed by the sequence of cost matrices $Y = \{ (Q_{i\vert{}k}, P_{i\vert{}k}) \}_{i=0}^T$, with $Q_{i\vert{}k} \in \mathbb{R}^{(n_x+n_u) \times (n_x+n_u)}$ and $P_{i\vert{}k} \in \mathbb{R}^{n_x+n_u}$.}

\black{Departing from standard memoryless actor-critic formulations~\cite{romero2024actor}, the primary objective of this framework is to synthesize a robust tracking policy that dynamically shapes $Y$ to explicitly reject physical disturbances ($d_k$) and time-correlated sensor noise ($n_k$). To achieve this, we propose the history-aware architecture in Fig.~\ref{fig:closed_loop_arch}. An initial MLP embeds an $L$-step history of references and noisy states, $\{x^d, \tilde{x}\}_{k-L:k}$, into a latent sequence $z_k \in \mathbb{R}^{L \times D}$. A core Transformer block subsequently filters this sequence to capture temporal dependencies, yielding a refined representation $p_k$ that a terminal MLP projects and reshapes into the cost parameterization $Y$. Positive semi-definiteness of $Q$ is structurally enforced while minimizing network output dimension via the softplus operator, such that $\diag(Q) = So(Q_{d}(\tilde{x}_k,x_k^d))$. Finally, to formally guarantee bounded closed-loop behavior under the resulting dynamically parameterized MPC law $u_k(\tilde{x}_k, Y)$, the internal stability of the core Transformer is rigorously established in Section~\ref{sec:iss-transf}.}

Efficient training of a network predicting $Y$ 
requires gradient backpropagation through the 
MPC solution, leveraging differentiable MPC 
frameworks within an Actor-Critic pipeline. 
To ensure smooth gradients and bypass piecewise 
active set discontinuities, \black{the inequality 
constraints $h(\omega)$ and the operational sets constraints $\mathcal{X}$ and $\mathcal{U}$} are relaxed from hard 
boundaries and integrated into the objective 
function via a smooth Softplus penalty 
$So(\cdot)$ with scaling factor $\gamma > 0$:
\begin{equation}
    \small
    J_{Y} = \sum_{i=0}^{T} \left[ \omega_{i|k}^{\top}Q_{i|k}\omega_{i|k} + P_{i|k}\omega_{i|k} + \gamma So(h(\omega_{i|k})) \right]
\end{equation}

By adopting the dual multiplier sequence $\beta$ strictly for the equality constraints $G(\omega) = \Phi(x_{i|k}, u_{i|k}) - x_{i+1|k} = 0$, the associated Lagrangian evaluates to:
\begin{equation}
    \small
    \mathcal{L}(\omega, \beta) = J_Y + \sum_{i=0}^{T-1} \beta_{i+1|k}^{\top}(\Phi(x_{i|k}, u_{i|k}) - x_{i+1|k})
\end{equation}
We make the following assumption:
\begin{assumption}[Strong Convexity and Regularity]
\label{ass:kkt_invertibility}
The system dynamics $\Phi(x,u)$ and the generalized inequality constraints $h(\omega)$ are twice continuously differentiable \black{and the equality constraints $G(\omega)$ satisfy the Linear Independence Constraint Qualification (LICQ) along optimal trajectories. Furthermore, we assume that along any optimal trajectory $\omega^*$ originating from the compact state space $\mathcal{X}$, the Strong Second-Order Sufficient Condition (SOSC) holds uniformly. Specifically, the Hessian of the Lagrangian evaluates to $\mathbb{H}(\omega^*) = 2Q + \nabla^2_{\omega\omega}\left( \beta^{*\top} G(\omega^*) + \gamma So(h(\omega^*)) \right)$ and satisfies a strict, uniform local strong convexity bound:}
\begin{equation}
    \mathbb{}{H} \triangleq \nabla^2_{\omega\omega} \mathcal{L} \succeq 2\mu_2 I \succ 0
\end{equation}
\black{where $\mu_2 > 0$ serves as a uniform lower bound on the local curvature.}
\end{assumption}
\black{This assumption is readily satisfied by ensuring the parameterized baseline cost matrix is strictly positive definite (e.g., $Q \succeq \epsilon I$ for some $\epsilon > 0$) such that it overcomes any local negative curvature introduced by the nonlinear dynamics and soft constraints at the optimum.}

To define the exact local sensitivity with respect to the cost matrices $Q$ and $P$, we apply the Implicit Function Theorem (IFT) to the Karush-Kuhn-Tucker (KKT) residual root-finding function \black{$F(\eta) \triangleq \nabla_\eta \mathcal{L}(\omega, \beta) = 0$}, where $\eta = [\omega^{\top}, \beta^{\top}]^{\top}$. Assuming the system dynamics and Softplus-relaxed constraints are twice continuously differentiable (Assumption~\ref{ass:kkt_invertibility}), the derivative with respect to the solution variables yields the fundamental KKT Matrix, $K = \nabla_\eta F$. 
Assumption~\ref{ass:kkt_invertibility} ensures the KKT matrix remains non-singular globally along the optimal trajectory. Isolating the differential mapping via the IFT provides the closed-form optimal control sensitivity with respect to the network outputs:
\begin{equation}
    \small
    \frac{\partial \eta^*}{\partial Y} = -K^{-1} \nabla_Y F = -K^{-1} \begin{bmatrix} 2\diag(\omega^*) & I \\ 0 & 0 \end{bmatrix}
\end{equation}

To study the transformer behavior in this problem,
we introduce also the following definition:
\begin{definition}[$\delta$ISS ~\cite{bayer2013discrete}]
\label{dISS_def}
A dynamical system is incrementally input-to-state stable with respect to the induced infinite norm
if there exist functions $\beta_{\delta} \in \mathcal{KL} $ and $\gamma_{\delta} \in \mathcal{K}_\infty$ 
such that, for any $k \in \mathbb{Z}_{\geq 0}$, any initial states 
$x_0^1,x_0^2 \in \mathcal{X}$ defining the initial state deviation $\delta x_0 \triangleq x_0^1 - x_0^2$, and any input sequences $u^1,u^2 \in \mathcal{U}$ defining the input deviation $\delta u \triangleq u^1 - u^2$, the state deviation $\delta x_k \triangleq x_k^1 - x_k^2$ satisfies:
\begin{equation}
\small
    \label{eq:incremental-iss}
    \|\delta x_k\|_{\infty} \leq \beta_{\delta}(\|\delta x_0\|_{\infty},k) + \gamma_{\delta}(\|\delta u\|_{\infty})
\end{equation} \vspace{0.1em}
\end{definition} 
\section{$\delta$ISS Transformer Network}
\label{sec:iss-transf}
To analyze the transformer layer, we can rely on its discrete dynamical system interpretation~\cite{abella2024asymptotic}. For the sequence $z_k \in \mathbb{R}^{L \times D}$ and the input sequence $p_k\in \mathbb{R}^{L \times D}$, the transformer block model follows:
\begin{equation}
\label{eq:transformer-system}
    z_{k+1} = \alpha_2 g(\alpha_1 z_k+A(z_k,p_k)) + f(g(\alpha_1 z_k+A(z_k,p_k)))
\end{equation}
With $\alpha_1,\alpha_2 < 1$, $f$ denoting a MLP network \black{with 1-Lipschitz continuous activation functions (e.g., Tanh, ReLU) and weight matrices $W_i$}, $g$ representing the layer normalization with parameters $c$ and $\beta$ \black{(where $\sigma_{\min}$ is the minimum standard deviation of its arguments and $\epsilon>0$ is a small stabilization constant)}, and $A(\cdot,\cdot)$ the multi-head attention mechanism between $z$ and $p$. Drawing upon the architecture of standard multi-head attention with $H$ heads, we formulate $A(z_k,p_k)$ as follows:
\begin{equation}
    A(z_k,p_k) = \text{Concat}(\mathcal{H}_1, \dots, \mathcal{H}_H) W_O
\end{equation}
\begin{equation}
    \mathcal{H}_h(z_k, p_k) = \text{softmax}\Bigg(\frac{(z_k W_Q^{(h)})(p_kW_K^{(h)})^T}{\sqrt{D_h}}\Bigg)(p_kW_V^{(h)})
\end{equation}
where $D_h = D/H$ represents the hidden dimension of each head. The matrices $W_Q^{(h)},W_K^{(h)},W_V^{(h)} \in \mathbb{R}^{D \times D_h}$ denote the linear projection weights for the queries, keys, and values for head $h$, and $W_O \in \mathbb{R}^{H \cdot D_h \times D}$ is the output projection matrix. Note that this architecture, used mostly in reinforcement learning~\cite{parisotto2020stabilizing}, does not have a LayerNorm as the final layer, but $\alpha < 1$ to get residual connections. 

\black{Unlike RNNs, transformers are stateless across time steps; the variable $z_k$ in Eq.~\ref{eq:transformer-system} denotes the layer-wise evolution of features during a single forward pass (utilizing a difference dynamic interpretation~\cite{abella2024asymptotic}), rather than a temporal state carried between control intervals}

\begin{assumption}
The input sequence $p$ is uniformly bounded by unity: $p \in \mathcal{P} \subseteq [-1,1]^{L \times D}$, such that $ \|p\|_{\infty} \leq 1$.
\label{assumption1}
\end{assumption}

\begin{theorem}
\label{dISS_stab}
Under Assumption~\ref{assumption1}, a sufficient condition for the system~\eqref{eq:transformer-system} to be $\delta$ISS is $\mathcal{A}_{\delta} < 1$; where
\begin{flalign*}
\small
    \begin{aligned}
    \mathcal{A}_{\delta} & \triangleq \left(\alpha_2+\prod_{i = 1}^{M}\| W_i \|_{\infty}\right)  \bigg( \frac{\|c\|_{\infty}}{\sigma_{min}+\epsilon} \big(\alpha_1 + \mathcal{L}_{MHA, z} \big) \bigg),
    \end{aligned}
\end{flalign*}
with 
\begin{equation*}
    \small
    \begin{aligned}
    \mathcal{L}_{MHA, z} & \triangleq \\ & \|W_O\|_{\infty} \sum_{h=1}^H \left( \frac{L}{2\sqrt{D_h}} \|W_V^{(h)}\|_{\infty} \|W_Q^{(h)}\|_{\infty} \|W_K^{(h)}\|_{\infty} \right).
    \end{aligned}
\end{equation*}
\end{theorem}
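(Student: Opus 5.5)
The plan is to show that the one-step map $z_{k+1} = \mathcal{T}(z_k,p_k)$ in \eqref{eq:transformer-system} is Lipschitz in $z$, with constant $\mathcal{A}_{\delta}$ in the induced $\infty$-norm, and is Lipschitz in $p$ with some finite constant $\mathcal{L}_p$ on the bounded set $\mathcal{P}$ of Assumption~\ref{assumption1}. Given these two constants, we have $\|\delta z_{k+1}\|_\infty \le \mathcal{A}_\delta\|\delta z_k\|_\infty + \mathcal{L}_p\|\delta p_k\|_\infty$. Iterating this inequality gives
\[
\|\delta z_k\|_\infty \le \mathcal{A}_\delta^k\|\delta z_0\|_\infty + \frac{\mathcal{L}_p}{1-\mathcal{A}_\delta}\|\delta p\|_\infty .
\]
Setting $\beta_\delta(r,k)=\mathcal{A}_\delta^k r$, which is in $\mathcal{KL}$ because $\mathcal{A}_\delta<1$, and $\gamma_\delta(r)=\mathcal{L}_p r/(1-\mathcal{A}_\delta)\in\mathcal{K}_\infty$ then yields Definition~\ref{dISS_def}.

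The Lipschitz constant in $z$ follows from the chain rule applied to the composition. Write $y=\alpha_1 z + A(z,p)$, so that $z^+ = \alpha_2 g(y) + f(g(y))$.
\begin{enumerate}
\item The outer map $w\mapsto \alpha_2 w + f(w)$ has constant at most $\alpha_2 + \prod_i\|W_i\|_\infty$, since the activations of $f$ are 1-Lipschitz.
\item The layer normalization $g$ is bounded by $\|c\|_\infty/(\sigma_{\min}+\epsilon)$. The centering step is nonexpansive after absorbing constants, the division by the regularized standard deviation contributes the factor $1/(\sigma_{\min}+\epsilon)$, and the gain $c$ scales the output.
\item The inner map contributes $\alpha_1 + \mathcal{L}_{MHA,z}$.
\end{enumerate}
Multiplying the three factors gives exactly $\mathcal{A}_\delta$. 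For the input direction, the same outer factors multiply $\mathcal{L}_{MHA,p}$, the Lipschitz constant of $A$ in $p$. This constant is finite because $p$ enters through the values $pW_V^{(h)}$ (bounded in terms of $\|p\|_\infty\le 1$) and through the keys inside a smooth softmax.

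The main obstacle is bounding the Lipschitz constant of each attention head in $z$. Since $\mathcal{H}_h = S(zW_QW_K^\top p^\top/\sqrt{D_h})\,V$ with $V=pW_V^{(h)}$, only the softmax argument depends on $z$. Row $r$ of the head is $s(a_r)^\top V$, where $a_r = W_K^\top p^\top W_Q^\top z_r/\sqrt{D_h}$. Its Jacobian is $V^\top J_s(a_r)$, where $J_s=\diag(s)-ss^\top$ is the softmax Jacobian.

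I would try to use the following facts. The softmax Jacobian has induced $\infty$-norm at most $2\max_i s_i(1-s_i)\le 1/2$, which accounts for the factor $1/2$ in $\mathcal{L}_{MHA,z}$. The value term gives $\|V\|_\infty\le\|p\|_\infty\|W_V^{(h)}\|_\infty\le\|W_V^{(h)}\|_\infty$. The key matrix $pW_K$ also has norm at most $\|W_K^{(h)}\|_\infty$, but passing between row-wise and column-wise norms of a matrix with $L$ rows costs a factor $L$, and this is where $L$ enters. Collecting the terms gives $\frac{L}{2\sqrt{D_h}}\|W_V^{(h)}\|_\infty\|W_Q^{(h)}\|_\infty\|W_K^{(h)}\|_\infty$ per head. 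Summing over the heads concatenated through $W_O$ gives $\mathcal{L}_{MHA,z}$, with the sum over $h$ bounding the $\infty$-norm of the concatenation. If the exact transposition bookkeeping does not reproduce the factor $L$ cleanly, I would fall back on the crude bound $\|X^\top\|_\infty\le L\|X\|_\infty$ for $X\in\mathbb{R}^{L\times D_h}$, which suffices for the stated $\mathcal{A}_\delta$. Convexity of the domain lets the mean value theorem turn these Jacobian bounds into the Lipschitz inequalities used above.
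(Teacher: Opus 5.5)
Your argument is the paper's proof written in Jacobian form. It has the same three-factor bound $\bigl(\alpha_2+\prod_i\|W_i\|_\infty\bigr)\frac{\|c\|_\infty}{\sigma_{\min}+\epsilon}\bigl(\alpha_1+\mathcal{L}_{MHA,z}\bigr)$ and the same per-head ingredients: softmax gain $\tfrac12$, $\|V\|_\infty\le\|W_V^{(h)}\|_\infty$ from Assumption~\ref{assumption1}, a factor $L$ from transposing the keys, and a sum over heads for the concatenation. It also uses the same geometric-series iteration. Where the paper splits $\delta\mathcal{H}_{h,k}$ with a product rule and $\|P_k\|_\infty=1$, you use the mean value theorem.

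Your bookkeeping closes without the fallback once you correct $a_r$, which is dimensionally inconsistent as written, to $a_r=pW_K^{(h)}W_Q^{(h)\top}z_r^\top/\sqrt{D_h}$, with $z_r$ the $r$-th row. Row $r$ of a head depends only on row $r$ of $z$, so you need the induced $\ell_1$ norm of $V^\top J_s(a_r)\,pW_K^{(h)}W_Q^{(h)\top}/\sqrt{D_h}$. Using $\|X^\top\|_1=\|X\|_\infty$, $\|J_s\|_1=\|J_s\|_\infty\le\tfrac12$ and $\|pW_K^{(h)}\|_1\le L\|pW_K^{(h)}\|_\infty$ gives exactly the per-head term of $\mathcal{L}_{MHA,z}$.

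Your sketch of the LayerNorm constant is not a derivation in this norm. Row-wise centering has gain $2(1-1/D)$ in the row-$\ell_1$ norm underlying $\|\cdot\|_\infty$, and differentiating $1/(\sigma+\epsilon)$ adds another term. The paper simply imports $L_g$ from the cited LayerNorm analysis, and you should do the same.

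The genuine gap is the input direction. Boundedness of $p$ and smoothness of the softmax do not make $\mathcal{L}_p$ uniform. The input $p$ enters the keys through $a_r$, whose derivative in $p$ carries the factor $W_Q^{(h)\top}z_r^\top$, so the key-path sensitivity scales with $\|z_k\|_\infty$. That is why the paper's $\mathcal{L}_{MHA,p}$ contains the explicit term $\frac{L}{2\sqrt{D_h}}\|W_V^{(h)}\|_\infty\|W_Q^{(h)}\|_\infty\|W_K^{(h)}\|_\infty\|z_k\|_\infty$, which it makes uniform through the remark that $\|z_k\|_\infty\le1$.

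The dependence is not an artifact of the bound. As $\|z\|_\infty$ grows, the softmax approaches a hard argmax. An arbitrarily small $\delta p$ that breaks a near-tie between two keys then moves the head output by a fixed amount (the difference of two value rows), so $A$ has no $p$-Lipschitz constant uniform over all $z$.

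You therefore need an explicit bound on $\sup_k\|z_k\|_\infty$. For example, LayerNorm has bounded range (each normalized row has $\ell_1$ norm at most $D$) and $f$ is Lipschitz. Hence $z_k=\alpha_2 g(y_{k-1})+f(g(y_{k-1}))$ is uniformly bounded for $k\ge1$, and a bounded set of initial features (or the paper's $\|z_k\|_\infty\le1$) covers $k=0$. With that bound, $\mathcal{L}_p=\|W_O\|_\infty\sum_h\bigl(\|W_V^{(h)}\|_\infty+\frac{L}{2\sqrt{D_h}}\|W_V^{(h)}\|_\infty\|W_Q^{(h)}\|_\infty\|W_K^{(h)}\|_\infty\sup_k\|z_k\|_\infty\bigr)$ is finite, and the rest of your argument goes through.
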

\proof
Given two sequence trajectories $z_k^1, z_k^2$ with deviation $\delta z_k \triangleq z_k^1 - z_k^2$, and corresponding inputs $p_k^1, p_k^2$ with deviation $\delta p_k \triangleq p_k^1 - p_k^2$, the next-step deviation $\delta z_{k+1} \triangleq z_{k+1}^1 - z_{k+1}^2$ is expressed as:  
\begin{equation*}
    \small
    \begin{aligned}
    \small
    \delta z_{k+1} & = \alpha_2 g( \alpha_1 z_k^1+A(z_k^1,p_k^1)) + f(g(\alpha_1z_k^1+A(z_k^1,p_k^1))) \\ & - \alpha_2g(\alpha_1 z_k^2+A(z_k^2,p_k^2)) + f(g(\alpha_1 z_k^2+A(z_k^2,p_k^2))).
    \end{aligned}
\end{equation*}

\black{By applying the triangle inequality and utilizing the Lipschitz constants of the LayerNorm~\cite{xu2019understanding}, $L_g = \frac{\|c\|_{\infty}}{\sigma_{\min} + \epsilon}$, and the MLP, $L_f \leq \prod_{i=1}^{M}\Vert{} W_i \Vert{}_{\infty}$, the magnitude of the state difference $\Vert{}\delta z_{k+1}\Vert{}_{\infty}$ is upper-bounded as follows (letting $\delta A_k \triangleq A(z_k^1,p_k^1) - A(z_k^2,p_k^2)$):}

\begin{equation}\small
\begin{aligned}\small
\|\delta z_{k+1}\|_{\infty} & \black{\leq \alpha_2 L_g \| \alpha_1 \delta z_k + \delta A_k \|_{\infty} + L_f L_g \| \alpha_1 \delta z_k + \delta A_k \|_{\infty}} \\
\leq \bigg( \alpha_2 + & \prod_{i = 1}^{M}| W_i \|_{\infty }\bigg) \bigg( \frac{\|c\|_{\infty}}{\sigma_{min}+\epsilon}(\alpha_1 \|\delta z_k\|_{\infty} + \|\delta A_k\|_{\infty}) \bigg)\end{aligned}\label{eq:state_diff}\end{equation}\black{Next, we bound the attention deviation $\delta A_k$.} By the submultiplicativity of the infinity norm,
and observing that the infinity norm of horizontally concatenated matrices is bounded by the sum of their individual infinity norms, we have:
\begin{equation}
\small
\begin{aligned}
\small
    \|\delta A_k\|_{\infty} \leq \|W_O\|_{\infty} \sum_{h=1}^H \|\delta \mathcal{H}_{h,k}\|_{\infty}
\end{aligned}    
\end{equation}
where $\delta \mathcal{H}_{h,k} \triangleq \mathcal{H}_h(z_k^1, p_k^1) - \mathcal{H}_h(z_k^2, p_k^2)$ is the deviation for head $h$. For a single head $h$, we evaluate the infinity norms of the projected queries, keys, and values as follows: $\|zW_Q^{(h)}\|_{\infty} \leq \|z\|_{\infty}\|W_Q^{(h)}\|_{\infty}$, $\|(uW_K^{(h)})^T\|_{\infty} \leq L\|u\|_{\infty}\|W_K^{(h)}\|_{\infty}$, and $\|uW_V^{(h)}\|_{\infty} \leq \|u\|_{\infty}\|W_V^{(h)}\|_{\infty}$. Letting $\mathcal{H}_{h,k} \triangleq P_k V_k$, with softmax matrix $P_k$ and values $V_k \triangleq p_k W_V^{(h)}$, the product rule yields $\|\delta \mathcal{H}_{h,k}\|_{\infty} \leq \|\delta P_k\|_{\infty}\|V_k\|_{\infty} + \|\delta V_k\|_{\infty}$, since $\|P_k\|_{\infty}=1$. Because $\delta p_k$ perturbs both the values ($\|\delta V_k\|_{\infty} \leq \|\delta p_k\|_{\infty}\|W_V^{(h)}\|_{\infty}$) and the keys within $P_k$, their bounded contributions sum together. Expanding $\|\delta P_k\|_{\infty}$ via the $\frac{1}{2}$-Lipschitz continuity of softmax and applying Assumption~\ref{assumption1}, we group the $\delta z_k$ and $\delta p_k$ terms to obtain:
\begin{equation*}
\small
    \begin{aligned}
    \small
    & \|\delta \mathcal{H}_{h,k}\|_{\infty} \leq  \left( \frac{L}{2\sqrt{D_h}} \|W_V^{(h)}\|_{\infty} \|W_Q^{(h)}\|_{\infty} \|W_K^{(h)}\|_{\infty} \right) \|\delta z_k\|_{\infty} + \\ &  \Big( \|W_V^{(h)}\|_{\infty} + \frac{L}{2\sqrt{D_h}} \|W_V^{(h)}\|_{\infty} \|W_Q^{(h)}\|_{\infty} \|W_K^{(h)}\|_{\infty} \|z_k\|_{\infty} \Big) \|\delta p_k\|_{\infty}
    \end{aligned}
\end{equation*}
Substituting this bound back into~\eqref{eq:state_diff}, we obtain:
\begin{equation*}
\small
    \begin{aligned}
    \small
    \|\delta & z_{k+1}\|_{\infty} \leq \left( \alpha_2 +\prod_{i = 1}^{M}\| W_i \|_{\infty}\right) \\ &  \bigg( \frac{\|c\|_{\infty}}{\sigma_{1min}+\epsilon} \Big( (\alpha_1 + \mathcal{L}_{MHA, z})\|\delta z_k\|_{\infty} + \mathcal{L}_{MHA, p} \|\delta p_k\|_{\infty} \Big) \bigg)  
    \end{aligned}
    \label{eq:state_diff_2}   
\end{equation*}
where $\mathcal{L}_{MHA, z}$ is defined in Theorem~\ref{dISS_stab} and $\mathcal{L}_{MHA, p} \triangleq \|W_O\|_{\infty} \sum_{h=1}^H \big( \|W_V^{(h)}\|_{\infty} + \frac{L}{2\sqrt{D_h}} \|W_V^{(h)}\|_{\infty} \|W_Q^{(h)}\|_{\infty} \|W_K^{(h)}\|_{\infty} \|z_k\|_{\infty} \big).$ 

By defining $\mathcal{B}_{\delta} \triangleq \big( \alpha_2+\prod_{i = 1}^{M}\| W_i \|_{\infty}\big) \big(\frac{\|c\|_{\infty}}{\sigma_{\min}+\epsilon} \mathcal{L}_{MHA, p} \big)$, we extract the incremental state bound stated in Theorem~\ref{dISS_stab} (where $\delta z_0 \triangleq z_0^1 - z_0^2$ denotes the initial state deviation):
\begin{equation}
\small
    \begin{aligned}
    \|\delta z_k\|_{\infty} \leq \mathcal{A}^k_{\delta} \|\delta z_0\|_{\infty} + (1-\mathcal{A}_{\delta})^{-1} \mathcal{B}_{\delta} \|\delta p\|_{\infty}.
    \end{aligned}
    \label{eq:incremental_trajectories}
\end{equation}
Consequently, the maximum distance between the state trajectories admits an asymptotic bound given by $ \gamma_{\delta} = (1-\mathcal{A}_{\delta})^{-1} \mathcal{B}_{\delta} \|\delta p\|_{\infty}$. This concludes the proof, demonstrating that the system is incrementally ISS according to Definition~\ref{dISS_def}.
\endproof
Note that the condition $\|z_k\|_{\infty} \leq 1$ holds for non-causal Transformer blocks. The standard deviation $\sigma_{\min}$ may become arbitrarily small during training and result in overly conservative stability bounds. A practical mitigation strategy is to assign an appropriately large stabilization constant, such as $\epsilon \simeq 0.1$.

\section{Closed Loop Stability}

To guarantee closed-loop stability, we evaluate the interconnected dynamics of the physical plant coupled with the predictive neural network. Defining $A \triangleq \nabla_x \Phi$ and $B \triangleq \nabla_u \Phi$ as the open-loop Jacobians of the physical plant evaluated along the predicted trajectory, the exact differential of the system dynamics \eqref{eq:system} is:
\begin{equation}
    dx_{k+1} = (A + B \nabla_x u^*) dx_k + B \nabla_Y u^* dY_k
    \label{eq:diffsys}
\end{equation}

Let $A_{cl} \triangleq A - B K$ represent the nominal closed-loop Jacobian of the MPC policy with $\nabla_x u^*$ the optimal state feedback gain.

\subsection{Contraction Metric Sequence}

To evaluate the physical transition stability without relying on overly conservative infinity-norm bounds, \black{ we use a Riemannian contraction metric. Rather than defining a global metric, we equip the state space with a time-varying local Riemannian metric evaluated along the predicted desired trajectory.} We extract the specific dynamic target intended by the Transformer at the end of the horizon, $\omega^*_{T|k} = [(x^*_{T|k})^\top, (u^*_{T|k})^\top]^\top$, analytically by finding the target minimum of the terminal cost stage:
\begin{equation}
    \omega^*_{T|k} = -\frac{1}{2} Q_{T|k}^{-1} P_{T|k}^\top
    \label{eq:terminal_target}
\end{equation}

Evaluating the system Jacobians at this terminal target ($A_T, B_T$), we construct the infinite-horizon terminal contraction metric $M_{T|k}$ by solving the steady-state Discrete Algebraic Riccati Equation (DARE) using the terminal predicted weights ($Q_{x, T|k}, Q_{u, T|k}$):
\begin{equation}
\small
\begin{aligned}
        M_{T|k} = & A_T^\top M_{T|k} A_T - A_T^\top M_{T|k} B_T (Q_{u, T|k} \\ & + B_T^\top M_{T|k} B_T)^{-1} B_T^\top M_{T|k} A_T + Q_{x, T|k}
\end{aligned}
\end{equation}
From this mathematically guaranteed anchor, we compute the time-varying contraction matrices backward along the predicted trajectory by solving the Dynamic Difference Riccati Equation (DRE) for $i = T-1$ down to $0$:
\begin{equation}
\small
\begin{aligned}
    M_{i|k} = & A_{i|k}^\top M_{i+1|k} A_{i|k}  - A_{i|k}^\top M_{i+1|k} B_{i|k} (Q_{u, i|k} \\ & + B_{i|k}^\top M_{i+1|k} B_{i|k})^{-1} B_{i|k}^\top M_{i+1|k} A_{i|k} + Q_{x, i|k}
\end{aligned}
    \label{eq:DRE}
\end{equation}
The root matrix $M_{0|k}$ serves as the Riemannian metric utilized to bound the physical control action at step $k$. \black{Given the Assumption~\ref{ass:kkt_invertibility} and the strictly boundedness of the jacobians $A,B$ and the cost matrices $Q,P$, also $M_{0\vert{}k}$ is uniformly bounded.}
\color{black}
\begin{lemma}[Uniformly Bounded Metric Shift]
\label{lem:metric_shift}
Under the receding horizon control scheme, the temporal shift of the contraction metric, defined as $\Delta M_{shift} \triangleq M_{0|k+1} - M_{1|k}$, admits a strict uniform supremum bound: $\sup_{x \in \mathcal{X},\, k \in \mathbb{Z}_{\geq 0}} \|\Delta M_{shift}\|_2 \leq \Delta\bar{M} $
where $\Delta\bar{M}$ is a constant proportional to the exogenous disturbance limit $\bar{d}$.
\end{lemma}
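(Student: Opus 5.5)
The approach is to view $M_{0|k+1}$ and $M_{1|k}$ as the same smooth map evaluated at two nearby arguments, and bound their difference by a Lipschitz constant times the argument deviation. Two facts make this work. The backward recursion \eqref{eq:DRE} at time $k+1$ starts from the measured state $x_{k+1}$. The recursion at time $k$, restricted to indices $i\ge 1$, starts from the predicted state $x^*_{1|k}=\Phi(x_k,u^*_{0|k})$. The true state satisfies $x_{k+1}=x^*_{1|k}+d_k(x_k)$, so the only exogenous mismatch between the two problems is $d_k$, with $\|d_k\|_\infty\le\bar d$.

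\textbf{Step 1: parameters to metric.} Write $M_{0|k+1}=\mathcal{R}(\zeta_{k+1})$ and $M_{1|k}=\mathcal{R}(\zeta_{1|k})$. Here $\zeta$ collects the Jacobians $(A_{i},B_{i})$ along the optimal trajectory and the weights $(Q_{x,i},Q_{u,i},P_i)$. The map $\mathcal{R}$ composes the DARE anchor with the backward DRE. Each DRE step is smooth in its arguments wherever $Q_{u}+B^\top M B\succ 0$, and by Assumption~\ref{ass:kkt_invertibility} $Q\succeq\epsilon I$. On the compact set given by Assumption~\ref{assumption2}, with bounded Jacobians and bounded $Q,P$, every intermediate $M_{i|k}$ is uniformly bounded, as the paper already notes. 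Hence $\mathcal{R}$ is Lipschitz with some constant $L_R$. For the DARE anchor, I would use the implicit function theorem on the Riccati equation. The terminal closed loop is Schur stable whenever $(A_T,B_T)$ is stabilizable, and this gives uniform invertibility of the linearized Riccati operator. Stabilizability must be assumed or absorbed into the boundedness assumption.

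\textbf{Step 2: state deviation to parameter deviation.} The Jacobians depend on the optimal trajectory through the $C^2$ dynamics. Their deviation is therefore bounded by the deviation of $\eta^*$. By the implicit function theorem sensitivity (non-singular $K$, uniformly bounded $K^{-1}$ from the uniform SOSC), $\eta^*$ is Lipschitz in both the initial state and $Y$. The weights $Y$ are produced by the network, which is Lipschitz in its inputs: Theorem~\ref{dISS_stab} gives the $\delta$ISS gain $(1-\mathcal{A}_\delta)^{-1}\mathcal{B}_\delta$, and the MLPs and softplus are Lipschitz. Since the network sees $x_{k+1}$ in place of $x^*_{1|k}$ in its history, the weight deviation is at most a constant times $\bar d$. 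Chaining gives $\|\zeta_{k+1}-\zeta_{1|k}\|\le L_\zeta\bar d$. Therefore $\|\Delta M_{shift}\|_2\le L_R L_\zeta\bar d\triangleq\Delta\bar M$, uniformly in $x$ and $k$ because all constants are suprema over compact sets.

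\textbf{Main obstacle.} The two problems are not merely a shifted copy with perturbed initial state. The shifted horizon at $k+1$ contains a new terminal stage and a new terminal DARE anchor, and the network is re-evaluated, so $M_{1|k}$ comes from a horizon ending at $T$ while $M_{0|k+1}$ comes from one ending at $T+1$. Even with $\bar d=0$ these need not coincide. A pure $O(\bar d)$ bound therefore requires either an assumption that the nominal receding-horizon solution is shift-consistent, or exponential forgetting of the terminal condition in the DRE, which yields an $O(\rho^{T})$ term. I would try first to obtain forgetting from the uniform stabilizability of the closed-loop Jacobians. If that fails, I would state $\Delta\bar M = L\bar d + c\rho^T$ and treat the second term as negligible, or assume consistency in the nominal case. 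The Lipschitz chaining itself is routine.
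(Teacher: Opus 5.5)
Your chain is the same as the paper's. The disturbance bounds the mismatch $x_{k+1}-x_{1|k}=d_k(x_k)$. The Transformer's $\delta$ISS gain, together with the MLP and softplus Lipschitz constants, turns this into a parameter shift of size $L_{\mathcal{T}}\bar d$. A Lipschitz Riccati map then turns that into $\Delta\bar M$.

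The paper's supplementary proof unrolls $\|\Delta M_i\|_2\le L_M\|\Delta M_{i+1}\|_2+L_Y\|\Delta Y_i\|_\infty$ stage by stage from a DARE anchor and appeals to $L_M<1$. Bundling this into a single Lipschitz map $\mathcal{R}$ is equivalent because $T$ is finite, so $L_M<1$ is not actually needed. You are more careful than the paper in two places, both of which the paper's $\mathcal{R}(M_{i+1|k},Y_{i|k})$ passes over:
\begin{itemize}
\item you propagate the dependence of $(A_{i},B_{i})$ on the optimal trajectory through the KKT sensitivity;
\item you justify the DARE Lipschitz constant via the implicit function theorem and stabilizability.
\end{itemize}

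The obstacle you raise is genuine. It sits exactly where the paper makes an unjustified assertion, namely $\sup_i\|Y_{i|k+1}-Y_{i+1|k}\|_\infty\le L_{\mathcal{T}}\|\Delta x_k\|_\infty$. The recursion is also anchored by $\|\Delta M_T\|_2\le L_{DARE}\|\Delta Y_T\|_\infty$, which involves $M_{T+1|k}$ and $Y_{T+1|k}$, and these do not exist.

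A Lipschitz or $\delta$ISS bound compares the network's outputs on two inputs. It says nothing about output slot $i$ at time $k+1$ versus slot $i+1$ at time $k$, which come from different history windows. Likewise, the tail of the time-$k$ MPC solution is not the time-$(k+1)$ solution. Your sentence that the network ``sees $x_{k+1}$ in place of $x^*_{1|k}$'' is the same tacit hypothesis, because no evaluation at time $k$ used a window containing $x^*_{1|k}$. So the paper's argument is really your first option, nominal shift-consistency of both the network output and the MPC solution, assumed implicitly.

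Your second option does not close the gap. Forgetting of the terminal condition only damps the influence of $M_{T}$. The mismatch, however, enters $M_{0|k+1}-M_{1|k}$ at stage $0$ with unit gain, through $Q_{x,0|k+1}-Q_{x,1|k}$ and through $A_{0|k+1}$ versus $A_{1|k}$. This is generically $O(1)$, not $O(\rho^T)$.

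The network input also contains the sensor noise $n_k$, which both you and the paper drop. Even under shift-consistency it would add a noise-proportional term.

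If you give up proportionality to $\bar d$, a uniform bound is immediate. Every Riccati iterate satisfies $M_{i|k}\succeq Q_{x,i|k}\succeq 0$. With a uniform bound $\bar m'$ on their largest eigenvalues, you get $-\bar m' I\preceq\Delta M_{shift}\preceq\bar m' I$, hence $\|\Delta M_{shift}\|_2\le\bar m'$. This finiteness is all that Lemma~\ref{lem:contraction_rate} formally uses, at the price of a residual that does not vanish as $\bar d\to 0$.
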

\proof
The metric shift $\Delta M_{shift}$ is driven by the state prediction error $\Delta x_k \triangleq x_{k+1} - x_{1|k} = d_k(x_k)$, which is strictly bounded by the disturbance supremum $\sup \|d_k\|_\infty \leq \bar{d}$. By the $\delta$ISS of the predictive Transformer, this initial state error maps to a bounded temporal shift in the predicted parameter sequence $Y$, satisfying $\|\Delta Y\|_\infty \leq L_{\mathcal{T}}\bar{d}$ for a spatial Lipschitz constant $L_{\mathcal{T}}$. Furthermore, the closed-loop Riccati recursion \eqref{eq:DRE} acts as a Lipschitz continuous mapping over the compact domain defined in Assumption~\ref{assumption2}. Cascading these continuous mappings guarantees that the metric perturbation evaluates to a finite global supremum, yielding $\Delta\bar{M} \triangleq L_{\mathcal{R}} L_{\mathcal{T}} \bar{d}$.
\endproof
The following lemma defines the contraction rate of the closed-loop MPC policy. 
\begin{lemma}[MPC Contraction Rate] \label{lem:contraction_rate} 
Under Assumption~\ref{ass:kkt_invertibility}, Lemma~\ref{lemma:contraction} and Lemma~\ref{lem:metric_shift}, the nominal unperturbed closed-loop system is globally exponentially contractive on the time-varying $M$-weighted Riemannian manifold, with a strict, uniform contraction rate bounded by:
\begin{equation}
    \rho_{mpc} \leq \sqrt{1 - \frac{\mu_2}{\bar{m}}} < 1
\end{equation}
\end{lemma}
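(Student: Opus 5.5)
The plan is to treat the Riccati matrix $M_{0|k}$ as a Lyapunov-like contraction metric for the nominal closed-loop Jacobian $A_{cl} = A - BK$. The argument combines the Bellman/Riccati structure of \eqref{eq:DRE} with the uniform curvature bound of Assumption~\ref{ass:kkt_invertibility}.

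\textbf{Step 1 (Riccati identity along the closed loop).} Take $K$ to be the optimal Riccati gain $K_{i|k} = (Q_{u,i|k} + B^\top M_{i+1|k} B)^{-1} B^\top M_{i+1|k} A$. By completing the square, \eqref{eq:DRE} can be rewritten in closed-loop form:
\begin{equation*}
M_{i|k} = A_{cl}^\top M_{i+1|k} A_{cl} + Q_{x,i|k} + K^\top Q_{u,i|k} K .
\end{equation*}
Under Assumption~\ref{ass:kkt_invertibility}, the IFT identifies the sensitivity $\nabla_x u^*$ with this time-varying LQR gain of the second-order expansion of $\mathcal{L}$. Hence the same identity holds with $K = -\nabla_x u^*$ at $i=0$.

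\textbf{Step 2 (dissipation via curvature).} From the SOSC bound $\mathbb{H} \succeq 2\mu_2 I$, I would extract the stage-cost curvature of the Riccati recursion as $Q_{x} + K^\top Q_u K \succeq \mu_2 I$, up to the factor $2$ convention. The projection of $\mathbb{H}$ onto the state direction $[I;\,-K]$ gives at least $\mu_2(I + K^\top K) \succeq \mu_2 I$. Using $M_{0|k} \preceq \bar m I$, this yields $\mu_2 I \succeq (\mu_2/\bar m) M_{0|k}$, and therefore
\begin{equation*}
A_{cl}^\top M_{1|k} A_{cl} \preceq \Big(1 - \frac{\mu_2}{\bar m}\Big) M_{0|k}.
\end{equation*}
Positivity of the left side forces $\mu_2 \le \bar m$, so the square root is real.

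\textbf{Step 3 (receding-horizon shift and conclusion).} The condition of Lemma~\ref{lemma:contraction} requires $M_{k+1} = M_{0|k+1}$, not $M_{1|k}$. I would write $M_{0|k+1} = M_{1|k} + \Delta M_{shift}$. For the \emph{nominal unperturbed} system we have $\bar d = 0$, so Lemma~\ref{lem:metric_shift} gives $\Delta\bar M = 0$ and the shift vanishes. The inequality of Step 2 is then exactly the hypothesis of Lemma~\ref{lemma:contraction} with $\alpha^2 = 1 - \mu_2/\bar m$. This gives geodesic contraction at rate $\rho_{mpc} \le \sqrt{1-\mu_2/\bar m} < 1$, uniformly, because $\mu_2$ and $\bar m$ are uniform constants.

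\textbf{Main obstacle.} I expect the hardest step to be Step 2, which links the full KKT Hessian bound to a lower bound on the Riccati stage term $Q_x + K^\top Q_u K$. $\mathbb{H}$ contains the dynamics-curvature term $\nabla^2(\beta^{*\top}G)$ and the softplus penalties, while the DRE \eqref{eq:DRE} uses only $Q$. I would first try to argue that the effective second-order LQR subproblem whose gain equals $\nabla_x u^*$ has stage Hessian equal to the corresponding block of $\mathbb{H}$. Under that reading the DRE should be understood with these effective weights, and the $\mu_2$ bound transfers directly. If that fails, I would fall back on the remark after Assumption~\ref{ass:kkt_invertibility}, namely $Q \succeq \epsilon I$, which gives the same estimate with $\mu_2$ replaced by the uniform floor on $Q$.
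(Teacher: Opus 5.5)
Your proof is correct at the paper's level of rigor, and Steps 1--2 are the paper's argument: the closed-loop form of \eqref{eq:DRE}, the dissipation identity, and the Rayleigh quotient with $\mu_2$ and $\bar m$. The differences lie in how $\mu_2$ enters and how the metric shift is handled.

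On $\mu_2$: the paper never derives the curvature floor from Assumption~\ref{ass:kkt_invertibility}. It drops $K_{0|k}^\top Q_{u,0|k}K_{0|k}$ and simply substitutes $\lambda_{\min}(Q_{x,0|k})\ge\mu_2$ as a ``global limit'', which is exactly your fallback (b). Your ``main obstacle'' is therefore a real gap that the paper leaves implicit: $\mathbb{H}\succeq 2\mu_2 I$ bounds $2Q$ plus the dynamics and softplus curvature, not $Q$ itself. Your option (a) is the more coherent choice if the Riccati gain is to equal $-\nabla_x u^*$. However, it redefines the metric, and hence $\bar m$, and an exact identification would also require replacing the DARE terminal anchor.

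On the shift: you set $\bar d=0$ and use the proportionality clause of Lemma~\ref{lem:metric_shift} to get $M_{0|k+1}=M_{1|k}$. Step 2 is then literally the hypothesis of Lemma~\ref{lemma:contraction}, and you obtain contraction with no offset. The paper instead keeps $\Delta\bar M>0$ and bounds $d_{\mathcal M_{0|k+1}}\le d_{\mathcal M_{1|k}}+\sqrt{\Delta\bar M}\int_0^1\|\partial_s\ell\|_2\,ds$. It then uses compactness of $\mathcal X$ to convert this into an additive offset $\bar d_{shift}=\sqrt{\Delta\bar M}D_{\mathcal X}$, giving $d_{\mathcal M_{0|k+1}}(x^1_{k+1},x^2_{k+1})\le\rho_{mpc}\,d_{\mathcal M_{0|k}}(x^1_k,x^2_k)+\bar d_{shift}$.

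Your route is cleaner for the literal ``nominal'' statement, but it depends on the shift vanishing exactly. The paper's route needs only a finite $\Delta\bar M$. It therefore tolerates shifts that do not scale with $\bar d$, such as the receding-horizon re-anchoring of the terminal DARE, and it produces the offset form on which the interconnected bound \eqref{eq:neural_bound} builds.
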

\proof
By rearranging the Difference Riccati Equation \eqref{eq:DRE} into its closed-loop form, the nominal energy dissipation evaluates to:
\begin{equation}
    A_{cl}^\top M_{1|k} A_{cl} - M_{0|k} = - (Q_{x, 0|k} + K_{0|k}^\top Q_{u, 0|k} K_{0|k}) \preceq -Q_{x, 0|k}
\end{equation}
Applying the Rayleigh quotient with the global limits $\lambda_{\min}(Q_{x}) \geq \mu_2$ and $\lambda_{\max}(M) \leq \bar{m}$, we obtain the autonomous contraction bound $A_{cl}^\top M_{1|k} A_{cl} \preceq \rho_{mpc}^2 M_{0|k}$.

To account for the time-varying metric shift, we apply Lemma~\ref{lem:metric_shift}. Because the physical state space is constrained to a compact domain (Assumption~\ref{assumption2}), the Euclidean length of the geodesic path is strictly bounded, allowing us to define an additive distance perturbation $\bar{d}_{shift}$. This yields $d_{\mathcal{M}_{0|k+1}}(x_{k+1}^1, x_{k+1}^2) \leq \rho_{mpc} d_{\mathcal{M}_{0|k}}(x_{k}^1, x_{k}^2) + \bar{d}_{shift}$, which satisfies Lemma~\ref{lemma:contraction} and concludes the proof. The complete derivation is provided in the supplementary materials. \color{black}
\endproof
\begin{remark}
    \black{Although the Riemannian metric at step $k$ is structurally local around the state trajectory defined by the Transformer, the global limits $\mu_2$ and $\bar{m}$ in  Lemma~\ref{lem:contraction_rate} under the result of Lemma~\ref{lem:metric_shift} define global contraction conditions for the closed loop MPC-system according to Lemma~\ref{lemma:contraction}.}
\end{remark}
\subsection{Interconnected Stability}
To rigorously evaluate the interconnected system, we first evaluate the coupled sensitivity mapping linking the neural parameter space to the physical Riemannian manifold. We evaluate $G \triangleq B \nabla_Y u_{0|k}^*$ using the mixed induced operator norm from the $L_\infty$ space to the Riemannian $M$-space:
\begin{equation}
\small
    \mathcal{S}_{Y} \triangleq \|B \nabla_Y u_{0|k}^*\|_{\infty \to M} = \sup_{dY \neq 0} \frac{\|B \nabla_Y u_{0|k}^* dY\|_M}{\|dY\|_\infty}
\end{equation}
Knowing that $\|\cdot\|_M \leq \sqrt{\lambda_{\max}(M)} \|\cdot\|_2$, we apply the submultiplicativity of the induced $2$-norm yields an intermediate bound: $\|B \nabla_Y u_{0|k}^* dY\|_M \leq \sqrt{\lambda_{\max}(M)} \|B\|_2 \|\nabla_Y u_{0|k}^* dY\|_2 $. Crucially, under the Receding Horizon Control (RHC) paradigm, the physical plant is only subjected to the first optimal control action $u_{0|k}^*$. Let $\Pi_0$ denote the linear selection matrix such that $u_{0|k}^* = \Pi_0 \omega^*$, allowing us to expand the variation as $\nabla_Y u_{0|k}^* = \Pi_0 \nabla_Y \omega^*$. By substituting this expansion into our intermediate bound and factoring out the induced norms with respect to $\|dY\|_\infty$, the mixed sensitivity norm is strictly bounded by the submultiplicative cascade mapping the Euclidean solver sensitivity into the physical metric:
\begin{equation}
\small
    \mathcal{S}_{Y} \leq \sqrt{\lambda_{\max}(M)} \|B\|_2 \|\Pi_0\|_2 \|\nabla_Y \omega^*\|_{\infty \to 2}
\end{equation}

The solver sensitivity $\|\nabla_Y \omega^*\|_{\infty \to 2}$ is governed by the top-left sub-block of the inverse KKT matrix, denoted $K_{11}^{-1}$, mapped against the parameter Jacobian $J \triangleq [2\diag(\omega^*) \;\; I]$. 
By exploiting the strict strong convexity established in Assumption~\ref{ass:kkt_invertibility} ($\|K_{11}^{-1}\|_2 \leq (2\mu_2)^{-1}$), the mixed sensitivity structurally evaluates to:
\begin{equation}
\small
    \mathcal{S}_{Y} \leq \frac{\sqrt{\lambda_{\max}(M)}}{2\mu_2} \|B\|_2 \| J \|_{\infty \to 2} 
\end{equation}

By integrating the exact differential $dx_{k+1}$ \eqref{eq:diffsys} along the geodesic $\ell(s)$, the true Riemannian trajectory transition bounded via the fundamental theorem of calculus seamlessly incorporates the mixed-norm sensitivity:
\begin{equation}
\small
    d_\mathcal{M}(x_{k+1}^1, x_{k+1}^2) \leq \rho_{mpc} d_\mathcal{M}(x_k^1, x_k^2) + \mathcal{S}_Y \|dY\|_\infty
\end{equation}

Denoting the initial and final MLP weight matrices of the Transformer as $W_{\Gamma}$ and $W_{\Psi}$ respectively, with aggregate $L_\infty$ Lipschitz bounds $L_{\Gamma}$ and $L_{\Psi}$, the natively decoupled transition bounds are:
\begin{equation}
\small
\begin{aligned}
\small
    d_\mathcal{M}(x_{k+1}^1, x_{k+1}^2) &\leq \rho_{mpc} d_\mathcal{M}(x_k^1, x_k^2) + \gamma_z \| \delta z_k \|_{\infty} + \gamma_d \bar{d}  \\
    \| \delta z_{k+1} \|_{\infty} &\leq \mathcal{A}_\delta \| \delta z_k \|_{\infty} + \gamma_x d_\mathcal{M}(x_k^1, x_k^2) + \gamma_x^{d} \| \delta d \|_{\infty}
\end{aligned}
\label{eq:neural_bound}
\end{equation}
Where the term $\bar{d}$ absorbs the maximum disturbance on the system~\eqref{eq:system} induced by $\| \delta d \|_{\infty}$ and $\| \delta x \|_{\infty}$, and the cross-coupled perturbation gains are analytically defined as $\gamma_z = \mathcal{S}_Y L_\Psi \mathcal{A}_\delta$, $\gamma_d = \mathcal{S}_Y L_\Psi \mathcal{B}_\delta L_\Gamma$. Conversely, mapping the Riemannian state distance back into the neural network input layer requires bridging from $M$-norm back to the $L_\infty$ space, structurally bounding $\gamma_x = \frac{1}{\sqrt{\lambda_{\min}(M)}} \mathcal{B}_\delta L_\Gamma$ and $\gamma_x^{d} = \mathcal{B}_\delta L_\Gamma$.

\begin{theorem}[Interconnected Closed-Loop Contraction]
\label{theorem:interconnected_dISS}
Consider the coupled dynamical system composed of the physical plant \eqref{eq:system} controlled by the parameterized differentiable MPC~\eqref{eq:problem}, and the predictive Transformer network evaluating the internal state $z_k$. The interconnected closed-loop system is contractive (cf. Lemma~\ref{lemma:contraction}) with respect to the reference trajectory if 
\begin{equation}
\small (1 - \rho_{mpc})(1 - \mathcal{A}_\delta) - \gamma_z \gamma_x > 0 \label{eq:cond3}
\end{equation}
\end{theorem}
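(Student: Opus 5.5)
The plan is a small-gain argument on the pair of coupled scalar inequalities \eqref{eq:neural_bound}. Set $e_k \triangleq [\,d_\mathcal{M}(x_k^1,x_k^2),\ \|\delta z_k\|_\infty\,]^\top \in \mathbb{R}^2_{\geq 0}$. Stacking the two transition bounds gives the componentwise comparison inequality
\begin{equation*}
\small
e_{k+1} \leq \Gamma e_k + w_k, \qquad \Gamma \triangleq \begin{bmatrix} \rho_{mpc} & \gamma_z \\ \gamma_x & \mathcal{A}_\delta \end{bmatrix}, \qquad w_k \triangleq \begin{bmatrix} \gamma_d \bar{d} \\ \gamma_x^d \|\delta d\|_\infty \end{bmatrix}.
\end{equation*}
Here $\rho_{mpc}<1$ comes from Lemma~\ref{lem:contraction_rate}, with the metric-shift term of Lemma~\ref{lem:metric_shift} absorbed into the disturbance. $\mathcal{A}_\delta<1$ is the hypothesis of Theorem~\ref{dISS_stab}. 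Because $\Gamma$ is entrywise nonnegative, the comparison inequality can be iterated, giving $e_k \leq \Gamma^k e_0 + \sum_{j=0}^{k-1}\Gamma^{j} w_{k-1-j}$.

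The key step is to show that \eqref{eq:cond3} is exactly the condition for the spectral radius $r(\Gamma)$ to be less than one. For a nonnegative $2\times 2$ matrix with diagonal entries in $[0,1)$, the Perron root satisfies $r(\Gamma)<1$ if and only if $\det(I-\Gamma) = (1-\rho_{mpc})(1-\mathcal{A}_\delta) - \gamma_z\gamma_x > 0$. One way to see this is that the characteristic polynomial $p(\lambda)$ is positive for large $\lambda$, and $p(1)=\det(I-\Gamma)>0$ together with the trace bound $\operatorname{tr}\Gamma<2$ places both real roots below $1$.

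To obtain a contraction statement in the form of Lemma~\ref{lemma:contraction}, I would build a weighted scalar distance. Take the positive left Perron eigenvector, or more concretely weights $v=[v_1,v_2]^\top>0$ satisfying $v^\top\Gamma \leq \bar\rho\, v^\top$ for some $\bar\rho<1$. Explicit weights work: choosing $v_2/v_1$ in the open interval $\big(\gamma_z/(1-\mathcal{A}_\delta),\ (1-\rho_{mpc})/\gamma_x\big)$, which is nonempty precisely under \eqref{eq:cond3}, makes both column conditions strict. The composite distance $V_k \triangleq v_1 d_\mathcal{M}(x_k^1,x_k^2) + v_2\|\delta z_k\|_\infty$ then obeys $V_{k+1}\leq \bar\rho V_k + v^\top w$. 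Iterating gives $V_k \leq \bar\rho^k V_0 + \frac{1-\bar\rho^k}{1-\bar\rho}\, v^\top \bar w$, with $\bar w$ the uniform disturbance bound. This is the same exponential-plus-bounded-offset structure as in Lemma~\ref{lemma:contraction}, now on the product manifold of the plant and the Transformer states. Since $d_\mathcal{M}$ is bounded below by $\sqrt{\lambda_{\min}(M)}$ times the Euclidean distance, $\delta x_k$ and $\delta z_k$ individually contract up to the offset.

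I expect the main obstacle to be justifying that \eqref{eq:neural_bound} holds uniformly at every step, with constant gains, along the same pair of trajectories. Two issues arise. The metric $M_{0|k}$ is time-varying, so the bound must be written with $d_{\mathcal{M}_{0|k+1}}$ on the left and $d_{\mathcal{M}_{0|k}}$ on the right, with Lemma~\ref{lem:metric_shift} supplying the additive shift. The gains $\gamma_x$ and $\gamma_z$ must also use the global bounds $\lambda_{\min}(M)$, $\bar m$ and $\mu_2$ rather than local values. I would handle this by replacing every step-dependent constant with its supremum over the compact set of Assumption~\ref{assumption2}, which keeps $\Gamma$ constant and the argument above intact.
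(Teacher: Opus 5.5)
Your proposal is correct and takes the same route as the paper. You stack the two bounds of \eqref{eq:neural_bound} into a nonnegative $2\times 2$ comparison system and identify \eqref{eq:cond3} with the spectral radius of the gain matrix being below one; your determinant/trace argument and the explicit weighted distance $V_k$ with $v_2/v_1 \in \big(\gamma_z/(1-\mathcal{A}_\delta),\,(1-\rho_{mpc})/\gamma_x\big)$ just make concrete the step the paper delegates to the cited interconnection result.
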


\proof
We construct a joint mixed-norm error state vector preserving the local topology, $\mathcal{E}_k = [d_\mathcal{M}(x_k^1, x_k^2), \| \delta z_k\|_{\infty}]^{\top}$. Stacking the term $\bar{d}$ and $\|\delta d \|_{\infty}$ in $\Delta d$ yields the unified state-space representation of the interconnected error dynamics $\mathcal{E}_{k+1} \leq \mathcal{M}_{sg} \mathcal{E}_k + \Psi_{ext} \Delta d$, where $\Psi_{ext} = [\gamma_d, \gamma_x^{d}]^{\top}$ and $\mathcal{M}_{sg} = \begin{bmatrix} \rho_{mpc} & \gamma_z; \gamma_x & \mathcal{A}_\delta \end{bmatrix}$

Because the matrix $\mathcal{M}_{sg}$ and the vector $\Psi_{ext}$ are strictly non-negative, the sequence $\mathcal{E}_k$ is contractive if and only if the spectral radius is $\rho(\mathcal{M}_{sg}) < 1$~\cite{russo2012contraction}. 
Since $ (1 - \rho_{mpc}), \gamma_z, \gamma_x $ are structurally positive, the condition~\eqref{eq:cond3} dictates the strict small-gain condition for the fully coupled loop. Notably, by tracking the physical distance naively via the Riemannian path integral, the condition number of the metric space avoids arbitrarily polluting the autonomous physical Lipschitz bound. Instead, the metric conversions analytically consolidate entirely within the cross-coupling product ($\gamma_z \cdot \gamma_x \propto \sqrt{\lambda_{\max}(M)/\lambda_{\min}(M)} = \sqrt{\kappa(M)}$). In strict accordance with true interconnected contraction theory, the metric's condition number isolates correctly as a localized boundary penalty strictly scaling the additive cross-domain disturbance, thereby gracefully preserving the nominal exponential contraction rate $\rho_{mpc}$.
\endproof
\section{Validation}

To illustrate the proposed methodology, we consider a drone modeled as a 3D flying rigid body. The discrete-time state vector at step $k$ is defined as $x_k = [p_k^\top, v_k^\top, q_k^\top, \omega_k^\top]^\top \in \mathbb{R}^{13}$, encompassing position, linear velocity, orientation quaternion, and angular velocity. The control input $u_k = [f_k, \tau_k^\top]^\top \in \mathbb{R}^4$ consists of the total thrust scalar and the body torque vector. 

Applying a second-order discretization with sampling time $\Delta t$, the system dynamics are evaluated inline as $p_{k+1} = p_k + v_k \Delta t + \frac{\Delta t^2}{2} a_k$, $v_{k+1} = v_k + a_k \Delta t$, $q_{k+1} = q_k \otimes \exp([0, \frac{\Delta t}{2} \omega_k]^\top)$, and $\omega_{k+1} = \omega_k + \Delta t J^{-1} (\tau_k - \omega_k \times J \omega_k)$. The translational acceleration is defined as $a_k = -g e_3 + \frac{f_k}{m} R(q_k) e_3$, where $m$ is the mass, $J \in \mathbb{R}^{3 \times 3}$ is the inertia matrix, $g$ is gravity, $e_3 = [0, 0, 1]^\top$, $R(q_k)$ is the rotation matrix, and $\otimes$ denotes quaternion multiplication. 

Because the orientation evolves on the $SO(3)$ manifold, the state penalty within the MPC cost function is formulated using the attitude error with respect to the identity quaternion $q_I$ as $x_e = 2 \text{sgn}(q_{ew}) q_{ev}$, where the relative error quaternion is $q_e = q_I^{-1} \otimes q = [q_{ew}, q_{ev}^\top]^\top$. 

We deploy the differentiable MPC~\cite{amatucci2025primal} ($T=10$) to maneuver the UAV toward a target, with and without obstacle avoidance. The Transformer-MPC is trained following~\cite{romero2024actor} (vanilla baseline) which we enforce the condition~\eqref{eq:cond3} as a structural regularization loss. To rigorously test robustness, the closed-loop system is subjected to severe uniform observation noise $n_k \sim \mathbb{U}(-0.4, 0.4)^{12}$ and uniform process disturbances $d_k \sim \mathbb{U}(-0.5, 0.5)^{12}$, injected as illustrated in the architecture diagram (Figure~\ref{fig:closed_loop_arch}). Under these conditions, we execute $10$ randomized validation episodes per scenario, reporting the mean and standard deviation of the resulting tracking errors in Figure~\ref{fig:results}.

The proposed stable network robustly converges to the target location in both the unconstrained and obstacle-cluttered environments, maintaining flight stability despite the high magnitude of injected disturbances. Conversely, the unregularized vanilla network~\cite{romero2024actor} systematically fails to reject the noise, leading to severe positional divergence and catastrophic flight instability (see Figures~\ref{fig:tracking_hover} and~\ref{fig:tracking_obstacle}). 

Furthermore, the stable network exhibits tightly bundled trajectory rollouts when initialized from identical starting conditions (Figure \ref{fig:3d_obstacle}). This highly correlated spatial evolution serves as strong empirical validation of the theoretical $\delta$ISS property proven in Section III, demonstrating that the maximum state deviation remains strictly bounded proportional to the exogenous disturbances. In the obstacle avoidance scenario, the theoretical robustness translates directly to safety. Under maximal disturbance conditions, the stable network achieved a $0\%$ collision rate, successfully navigating the environment, whereas the flight profile of the vanilla approach resulted in a $30\%$ collision rate.

Finally, we analyze the attitude tracking error, depicted in Figures \ref{fig:tracking_hover} and \ref{fig:tracking_obstacle}. While the stable network maintains a bounded orientation error (standard deviation $\approx 0.12$ rad), the unregularized network generally achieves tighter nominal attitude tracking, prior to its ultimate divergence in the obstacle scenario. This observation indicates that strictly enforcing the rigid global contraction bounds marginally restricts the network's flexibility to execute aggressive transient rotational maneuvers during training. Relaxing these bounds via localized or state-dependent contraction metrics represents a promising direction for future research to enhance transient agility in practical applications.

\begin{figure}[t]
\begin{subfigure}{0.45\linewidth}
    \includegraphics[width=\linewidth]{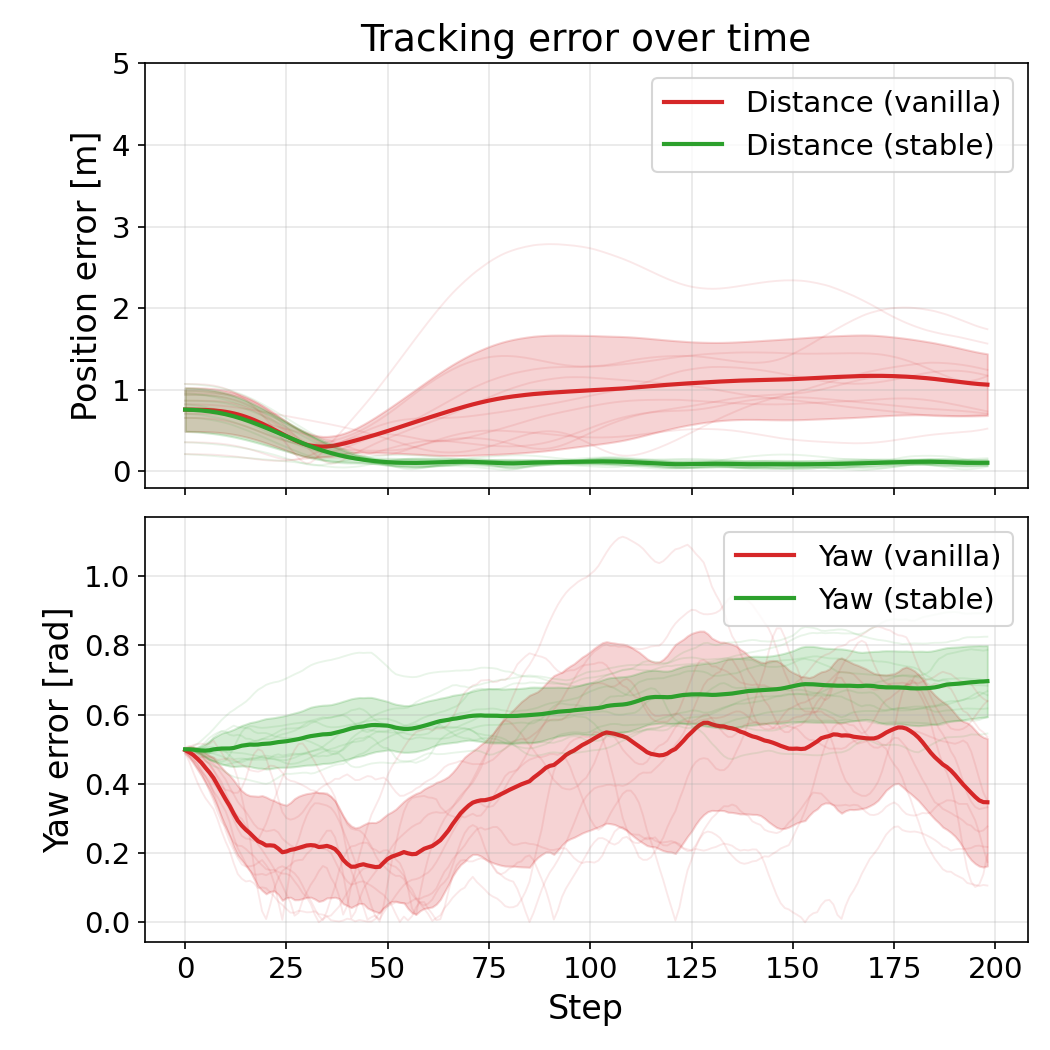}
    \caption{}
    \label{fig:tracking_hover}
\end{subfigure}\hfill
\begin{subfigure}{0.45\linewidth}
    \includegraphics[width=\linewidth]{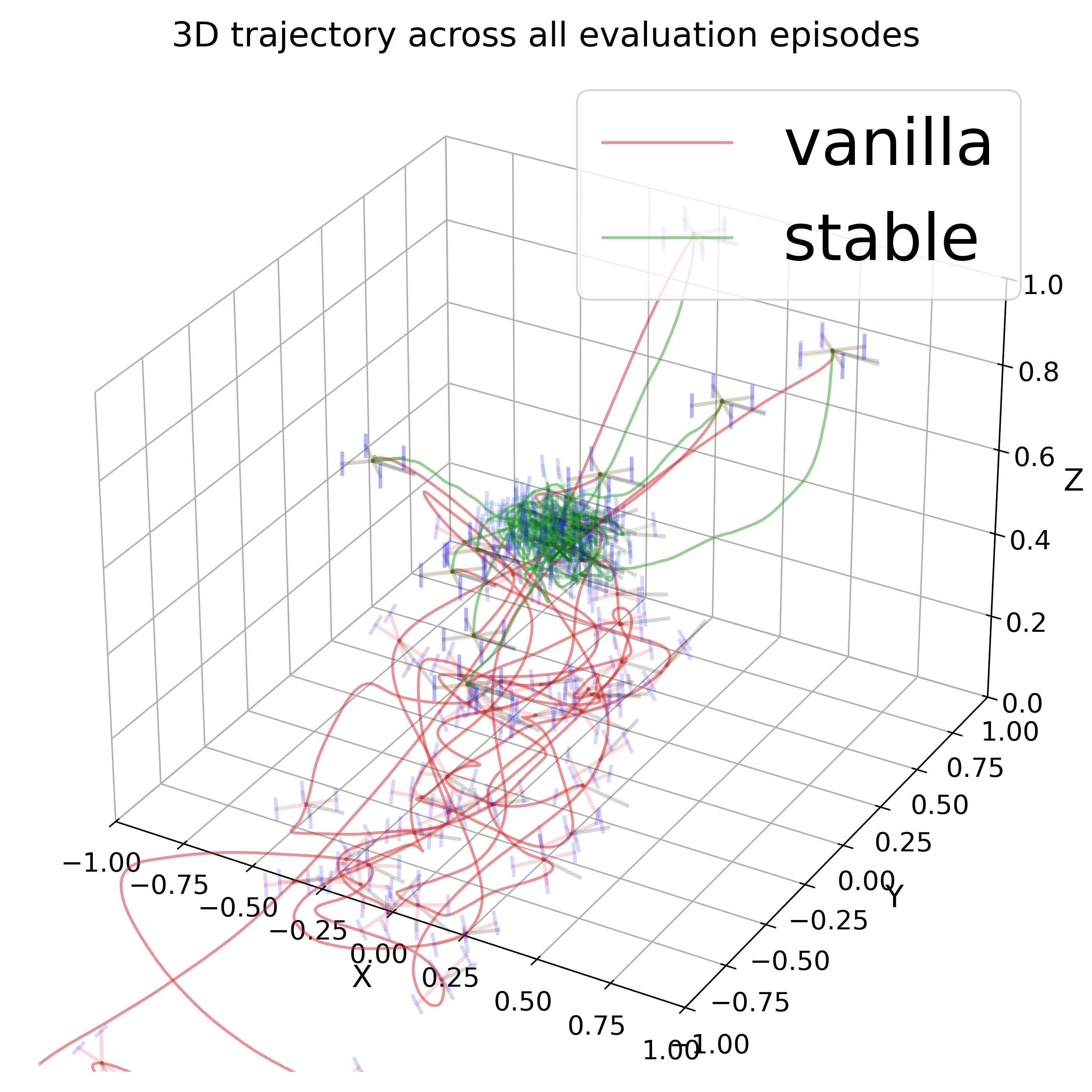}
    \caption{}
    \label{fig:3d_hover}
\end{subfigure}\vfill
\begin{subfigure}{0.45\linewidth}
    \includegraphics[width=\linewidth]{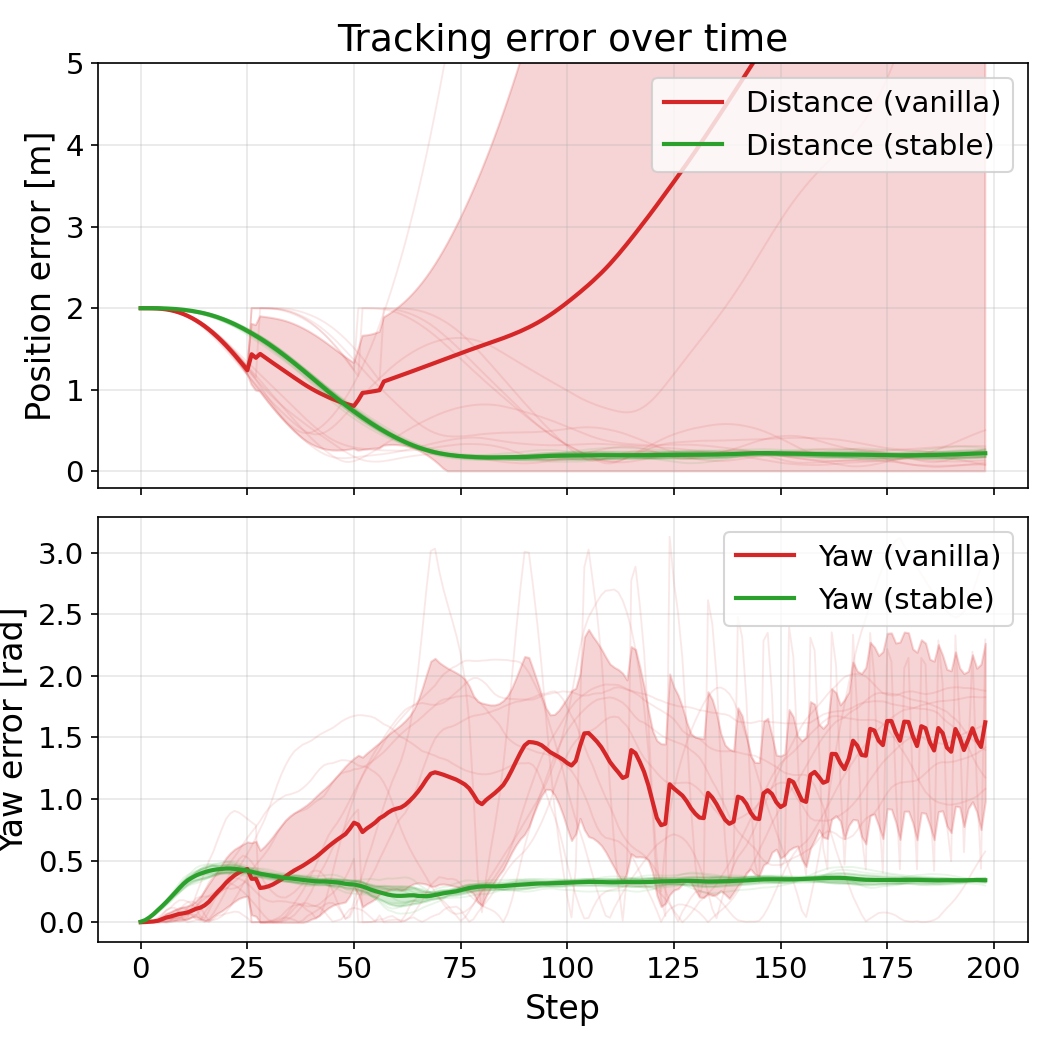}
    \caption{}
    \label{fig:tracking_obstacle}
\end{subfigure}\hfill
\begin{subfigure}{0.45\linewidth}
    \includegraphics[width=\linewidth]{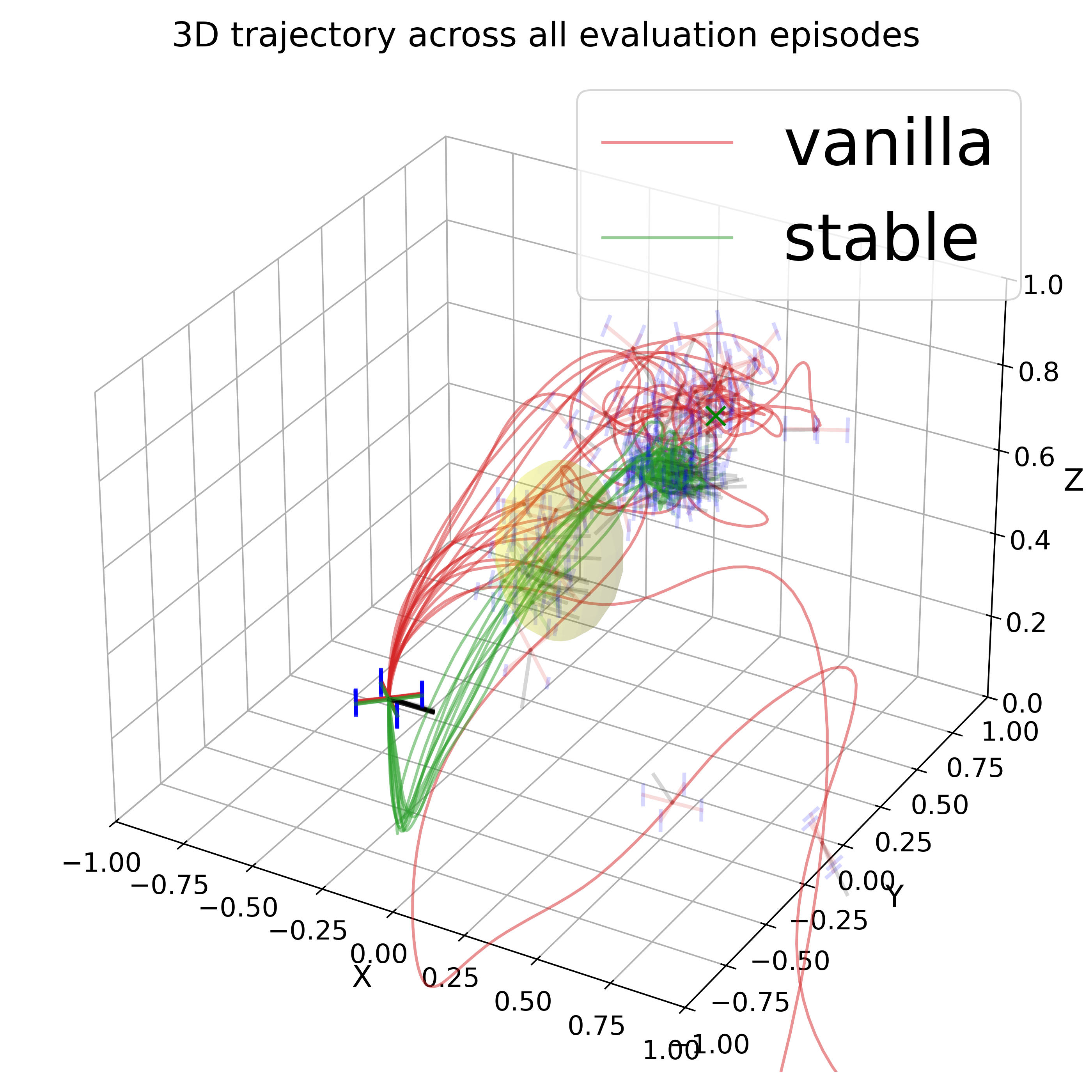}
    \caption{}
    \label{fig:3d_obstacle}
\end{subfigure}
\caption{Target reaching 3D trajectories and tracking errors under uniform noise $n_k \sim \mathbb{U}(-0.4,0.4)^{12}$ and process disturbance $d_k \sim \mathbb{U}(-0.5,0.5)^{12}$. (a,b) Obstacle-free scenario. (c,d) Scenario with an intervening obstacle.}
\label{fig:results}
\end{figure}
\section{Conclusion}
This paper introduced a certifiably robust Transformer-Actor-Critic Model Predictive Control architecture. We established that sequence-modeling Transformers can be mathematically constrained to satisfy global incremental Input-to-State Stability ($\delta$ISS). By integrating these neural Lipschitz bounds with Riemannian contraction theory, we derived a coupled small-gain condition that guarantees uniform closed-loop contraction for the fully interconnected system. Numerical validation on a highly nonlinear 3D drone model demonstrated that enforcing this theoretical bound during training yields a robust control policy, successfully navigating obstacle-cluttered environments under severe disturbances where unregularized baselines catastrophically fail. \black{Future work will explore state-dependent contraction metrics to mitigate trade-offs between strict global stability bounds and rotational agility. Furthermore, we will extend this framework to output-feedback settings, leveraging partial contraction theory to account for unmeasurable variables or systems with lower-dimensional controllable states.}
\bibliographystyle{IEEEtran} 
\bibliography{IEEEabrv,bibliography}

\iftrue

\clearpage
\onecolumn
\section*{\large Supplementary Materials}
\subsection{Extended proof to Lemma~\ref{lem:metric_shift}}

We report an extended version to Lemma stating:
Under the receding horizon control scheme, the temporal shift of the contraction metric, defined as $\Delta M_{shift} \triangleq M_{0|k+1} - M_{1|k}$, admits a strict uniform supremum bound: $$\sup_{x \in \mathcal{X},\, k \in \mathbb{Z}_{\geq 0}} \|\Delta M_{shift}\|_2 \leq \Delta\bar{M} $$
where $\Delta\bar{M}$ is a constant proportional to the exogenous disturbance limit $\bar{d}$.

\proof[Proof of Lemma~\ref{lem:metric_shift}]
Let the nominal predicted state at the next step be $x_{1|k} = \Phi(x_k, u_k)$. The true realized physical state is subjected to the additive external disturbance: $x_{k+1} = x_{1|k} + d_k(x_k)$. The state deviation passed to the predictive Transformer network is exactly this disturbance, satisfying the infinite norm supremum bound:
\begin{equation}
    \sup_{x \in \mathcal{X},\, k \in \mathbb{Z}_{\geq 0}} \|\Delta x_k\|_\infty = \sup_{x \in \mathcal{X},\, k \in \mathbb{Z}_{\geq 0}} \|d_k(x_k)\|_\infty \leq \bar{d}
\end{equation}

This initial state error drives a temporal parameter shift across the prediction horizon. Let $\Delta Y_i \triangleq Y_{i|k+1} - Y_{i+1|k}$ denote the difference between the newly generated parameters and the previously predicted parameters. By the incremental $\delta$ISS of the Transformer over the compact domain (Assumption~\ref{assumption2}), this parameter shift is bounded by the spatial Lipschitz constant $L_{\mathcal{T}}$:
\begin{equation}
    \sup_i \|\Delta Y_i\|_\infty \leq L_{\mathcal{T}} \|\Delta x_k\|_\infty \leq L_{\mathcal{T}} \bar{d}
\end{equation}

To evaluate how this bounded parameter shift propagates into the metric space, let the Difference Riccati Equation \eqref{eq:DRE} be denoted by the mapping $M_{i|k} = \mathcal{R}(M_{i+1|k}, Y_{i|k})$. Exploiting the local Lipschitz properties of the closed-loop Riccati operator with respect to the metric ($L_M$) and the network parameters ($L_Y$), the single-step metric difference $\Delta M_i \triangleq M_{i|k+1} - M_{i+1|k}$ satisfies:
\begin{equation}
    \|\Delta M_i\|_2 \leq L_M \|\Delta M_{i+1}\|_2 + L_Y \|\Delta Y_i\|_\infty
\end{equation}

Unrolling this recursive inequality backward from the current step $i=0$ up to $T-1$, and applying the steady-state DARE bound at the terminal horizon ($\|\Delta M_T\|_2 \leq L_{DARE} \|\Delta Y_T\|_\infty$), we obtain:
\begin{equation}
    \|\Delta M_{shift}\|_2 \leq \left( L_M^T L_{DARE} + L_Y \sum_{j=0}^{T-1} L_M^j \right) \sup_i \|\Delta Y_i\|_\infty
\end{equation}

Because the nominal closed-loop Riccati operator is strictly contractive for stabilizable systems ($L_M < 1$), the geometric series converges to a finite limit. Taking the supremum over the domain and substituting the infinite norm disturbance bound yields an explicit, strict upper limit:
\begin{equation}
    \sup_{x,k} \|\Delta M_{shift}\|_2 \leq \left( L_M^T L_{DARE} + L_Y \frac{1 - L_M^T}{1 - L_M} \right) L_{\mathcal{T}} \bar{d} \triangleq \Delta \bar{M}
\end{equation}

Thus, the temporal metric shift evaluates to a globally valid supremum bound $\Delta\bar{M}$, rigorously isolating the metric variation so it can be subsequently aggregated with the external bounded disturbance in the physical contraction analysis.
\endproof

\subsection{ Extended proof to Lemma~\ref{lem:contraction_rate}}

In this subsection, we report an extended version of the proof to Lemma 3 stating: Under Assumption~\ref{ass:kkt_invertibility}, Lemma~\ref{lemma:contraction} and Lemma~\ref{lem:metric_shift}, the nominal unperturbed closed-loop system is globally exponentially contractive on the time-varying $M$-weighted Riemannian manifold, with a strict, uniform contraction rate bounded by:
\begin{equation}
    \rho_{mpc} \leq \sqrt{1 - \frac{\mu_2}{\bar{m}}} < 1
\end{equation}

\proof[Proof of Lemma~\ref{lem:contraction_rate}]
Consider the closed-loop system under the MPC policy. For the system to be autonomously contractive according to Lemma~\ref{lemma:contraction}, the true closed-loop Jacobian $A_{cl}$ must satisfy $A_{cl}^\top M_{0|k+1} A_{cl} \preceq \alpha^2 M_{0|k}$.

By utilizing the algebraic identities of the optimal feedback gain $K_{0|k}$ from the Riccati recursion, the Difference Riccati Equation \eqref{eq:DRE} can be algebraically rearranged into its equivalent closed-loop form:
\begin{equation}
    M_{0|k} = A_{cl}^\top M_{1|k} A_{cl} + K_{0|k}^\top Q_{u, 0|k} K_{0|k} + Q_{x, 0|k}
\end{equation}
Rearranging this identity yields the exact nominal energy dissipation:
\begin{equation}
    A_{cl}^\top M_{1|k} A_{cl} - M_{0|k} = - (Q_{x, 0|k} + K_{0|k}^\top Q_{u, 0|k} K_{0|k})
\end{equation}
Because the control effort penalty $K_{0|k}^\top Q_{u, 0|k} K_{0|k} \succeq 0$ strictly adds to the energy dissipation, we conservatively drop it to establish a strict upper bound on the contraction rate. The nominal contraction requirement evaluates to:
\begin{equation}
    A_{cl}^\top M_{1|k} A_{cl} \preceq M_{0|k} - Q_{x, 0|k} \preceq \rho_{mpc}^2 M_{0|k}
    \label{eq:mpc-rate-supp}
\end{equation}
Applying the Rayleigh quotient, the required scalar $\rho_{mpc}^2$ that satisfies this inequality is bounded by the minimum eigenvalue of the stage cost and the maximum eigenvalue of the current metric:
\begin{equation}
    \rho_{mpc}^2 \leq 1 - \frac{\lambda_{\min}(Q_{x, 0|k})}{\lambda_{\max}(M_{0|k})}
\end{equation}
Substituting the global limits $\lambda_{\min}(Q_{x, 0|k}) \geq \mu_2$ and $\lambda_{\max}(M_{0|k}) \leq \bar{m}$, we obtain the global, uniform contraction rate $\rho_{mpc} \leq \sqrt{1 - \mu_2/\bar{m}}$. Because $\mu_2 > 0$ and $\bar{m} < \infty$, the fraction is strictly positive, structurally guaranteeing $\rho_{mpc} < 1$ for all bounded trajectories.

Next, we account for the time-varying metric. By Lemma~\ref{lem:metric_shift}, we have:
\begin{equation}
    d_{\mathcal{M}_{0|k+1}}(x_{k+1}^1, x_{k+1}^2) \leq d_{\mathcal{M}_{1|k}}(x_{k+1}^1, x_{k+1}^2) + \sqrt{\Delta\bar{M}} \int_0^1 \left\| \frac{\partial \ell}{\partial s} \right\|_2 ds
\end{equation}
Under Assumption~\ref{assumption2}, the physical state space is constrained to a compact domain ($\|x\|_\infty \leq \tilde{X}$). Consequently, the Euclidean length of the geodesic path is strictly bounded by a geometric constant $D_{\mathcal{X}} \propto \tilde{X}$. We can therefore define an equivalent additive distance perturbation $\bar{d}_{shift} \triangleq \sqrt{\Delta\bar{M}} D_{\mathcal{X}}$. 

By applying equation~\eqref{eq:mpc-rate-supp}, we derive:
\begin{equation}
    d_{\mathcal{M}_{0|k+1}}(x_{k+1}^1, x_{k+1}^2) \leq \rho_{mpc} d_{\mathcal{M}_{0|k}}(x_{k}^1, x_{k}^2) + \bar{d}_{shift}
\end{equation}
which satisfies the conditions of Lemma~\ref{lemma:contraction} and completes the proof.
\endproof

\section*{Experimental Setup and Implementation Details}
\label{sec:experimental_setup}

\subsection{Hardware and Software Infrastructure}
The entire learning and control pipeline was implemented in Python using JAX to leverage hardware-accelerated just-in-time (JIT) compilation and automatic differentiation. Training was conducted on a compute node equipped with a dedicated GPU NVIDIA RTX 5080. The environment simulation, policy updates, and differentiable control layers were heavily vectorized, running $2048$ parallel environments simultaneously.

\subsection{Model Architecture and Differentiable MPC}
Our architecture tightly couples a deep reinforcement learning policy with a differentiable Model Predictive Control (MPC) module. The policy networks are structured as follows:

\begin{itemize}
    \item \textbf{Feature Extractor \& Transformer:} The sequential state history is processed by a single-layer Transformer encoder. This layer features an embedding dimension of $128$, $2$ attention heads, and a feed-forward network (FFN) hidden dimension of $128$.
    \item \textbf{Actor-Critic Networks:} The output of the Transformer serves as the input to the Actor and Critic heads. Both the Actor and Critic are Multi-Layer Perceptrons (MLPs) consisting of $2$ hidden layers with $256$ units each, utilizing $\tanh$ activation functions.
    \item \textbf{Differentiable MPC Layer:} The policy outputs task parameters for the MPC layer. The MPC solver computes trajectories over a horizon of $H = 10$ steps, utilizing $1$ main solver iteration and $20$ Preconditioned Conjugate Gradient (PCG) iterations. Gradients are backpropagated through the MPC optimization directly into the upstream neural networks.
\end{itemize}

\subsection{Reward Formulation}
To encourage smooth, stable, and accurate flight towards the target, we utilize a composite reward function. The reward formulation heavily relies on exponentially mapped cost terms that gate behaviors based on proximity to the target, preventing reward collapse and limit-cycle oscillations. 

The total reward $R_t$ at time $t$ is computed as a weighted sum of individual scoring components alongside a sparse completion bonus:
\begin{equation}
\label{eq:reward}
R_t = w_1 S_{\text{pos}} + w_2 S_{\text{pos}} R_{\text{yaw}} + w_3 R_{\text{yaw}} S_{\text{pos}} S_{\omega} + w_4 S_{\text{pos}} S_{\text{hover}} + w_5 S_{\text{pos}} S_{\text{smooth}} + w_6 S_{\text{target\_stab}} + w_7 S_{\text{target\_settle}} + B_{\text{close}}
\end{equation}

The specific scoring terms $S_{(\cdot)}$ map raw errors to $[0, 1]$ bounds, and the gains $w_i$ balance the task priorities. The exact definitions and weights are detailed in Table~\ref{tab:reward_terms}.

\begin{table}[ht]
\centering
\caption{Reward Function Components and Gains}
\label{tab:reward_terms}
\renewcommand{\arraystretch}{1.3}
\begin{tabular}{@{}lllc@{}}
\toprule
\textbf{Component} & \textbf{Symbol} & \textbf{Formulation} & \textbf{Gain ($w_i$)} \\ \midrule
Position Score & $S_{\text{pos}}$ & $\exp(-2.0 \| \mathbf{p} - \mathbf{p}_{\text{target}} \|)$ & $3.0$ \\
Yaw Score & $R_{\text{yaw}}$ & $1.0 \text{ if } e_{\text{yaw}} < 0.05 \text{ else } \exp(-2.0 \, e_{\text{yaw}})$ & $1.0$ \\
Angular Vel Score & $S_{\omega}$ & $\exp(-1.0 \| \boldsymbol{\omega} \|)$ & $0.6$ \\
Hover Score & $S_{\text{hover}}$ & $\exp(-1.5 \| \mathbf{u}_{\text{norm}} - \mathbf{u}_{\text{hover,norm}} \|)$ & $0.4$ \\
Smoothness Score & $S_{\text{smooth}}$ & $\exp(-1.5 \| \mathbf{u}_{\text{norm}} - \mathbf{u}_{\text{last,norm}} \|)$ & $0.2$ \\
Target Stability & $S_{\text{target\_stab}}$ & $G_{\text{near}} \cdot S_{\text{vel}} \cdot S_{\omega}$ & $1.2$ \\
Target Settle & $S_{\text{target\_settle}}$& $G_{\text{near}} \cdot S_{\text{smooth}}$ & $0.8$ \\
Close Bonus & $B_{\text{close}}$ & $4.0 \text{ if } \| \mathbf{p} - \mathbf{p}_{\text{target}} \| < 0.02 \text{ else } 0.0$ & N/A \\
\bottomrule
\end{tabular}
\end{table}

\noindent \textit{Auxiliary definitions:} The gating term for target proximity is defined as $G_{\text{near}} = \exp(-8.0 \| \mathbf{p} - \mathbf{p}_{\text{target}} \|)$. The velocity penalty $S_{\text{vel}} = \exp(-2.5 \| \mathbf{W}_v \mathbf{v} \|)$ applies extra damping around the goal using a diagonal weight matrix $\mathbf{W}_v = \text{diag}(1, 1, 1.8)$ to penalize vertical drift more heavily.

\subsection{Training Hyperparameters}
The models were trained using Proximal Policy Optimization (PPO). The environment observations and return were normalized during training. A comprehensive list of the hyperparameters used for the PPO algorithm and the network architecture is provided in Table~\ref{tab:hyperparameters}.

\begin{table}[ht]
\centering
\caption{Training and Architecture Hyperparameters}
\label{tab:hyperparameters}
\renewcommand{\arraystretch}{1.1}
\begin{tabular}{@{}ll | ll@{}}
\toprule
\textbf{Hyperparameter} & \textbf{Value} & \textbf{Hyperparameter} & \textbf{Value} \\ \midrule
\multicolumn{2}{c|}{\textit{RL Algorithm (PPO)}} & \multicolumn{2}{c}{\textit{Network \& MPC}} \\ \midrule
Optimizer & Adam & Actor Layer Sizes & $[256, 256]$ \\
Learning Rate & $0.001$ (Constant) & Critic Layer Sizes & $[256, 256]$ \\
Number of Environments & $2048$ & Activation Function & $\tanh$ \\
Episode Length (Steps) & $200$ & Transformer Layers & $1$ \\
Update Epochs & $4$ & Transformer Heads & $2$ \\
Number of Minibatches & $25$ & Transformer Hidden Dim & $128$ \\
Discount Factor ($\gamma$) & $0.99$ & Transformer FFN Dim & $128$ \\
GAE Parameter ($\lambda$) & $0.95$ & Stability Regularization Coef & $1 \times 10^{-3}$ \\
PPO Clip Epsilon & $0.1$ & MPC Horizon & $10$ \\
Entropy Coefficient & $0.001$ & MPC Iterations & $1$ \\
Value Function Coef & $0.5$ & MPC PCG Iterations & $20$ \\
Max Gradient Norm & $0.5$ & Observation Normalization & True \\ \bottomrule
\end{tabular}
\end{table}

\fi
\end{document}